\newtheorem{theorem}{Theorem}
\newtheorem*{theorem*}{Theorem}
\newtheorem*{claim*}{Claim}
\newtheorem*{proposition*}{Proposition}
\newtheorem{lemma}[theorem]{Lemma}
\newtheorem*{lemma*}{Lemma}
\newtheorem*{conjecture*}{Conjecture}
\newtheorem*{fact*}{Fact}
\newtheorem*{hypothesis*}{Hypothesis}
\theoremstyle{definition}
\newtheorem{definition}[theorem]{Definition}
\DeclareMathOperator{\Tr}{Tr}
\DeclareMathOperator{\cost}{cost}
\DeclareMathOperator{\fcost}{fair-cost}
\title{Socially Fair Center-based and Linear Subspace Clustering}
\date{}
\author[1]{Sruthi Gorantla}
\author[2]{Kishen N Gowda}
\author[3]{Amit Deshpande\(^*\)}
\author[1]{Anand Louis\thanks{Equal contribution}}
\affil[1]{\small{Indian Institute of Science, Bengaluru, India. \texttt{\{gorantlas, anandl\}@iisc.ac.in}}}
\affil[2]{\small{University of Maryland, College Park, US. \texttt{kishen19@cs.umd.edu}}}
\affil[3]{\small{Microsoft Research, Bengaluru, India. \texttt{amitdesh@microsoft.com}}}
\renewcommand{\leq}{\leqslant}
\renewcommand{\le}{\leqslant}
\renewcommand{\geq}{\geqslant}
\renewcommand{\ge}{\geqslant}
\newcommand{\paren}[1]{\left(#1 \right )}
\newcommand{\set}[1]{\left\{#1\right\}}
\newcommand{\Z}{{\mathbb Z}}
\newcommand{\R}{\mathbb R}
\newcommand{\Q}{\mathbb Q}
\newcommand{\cO}{\mathcal{O}}
\newcommand{\cP}{\mathcal{P}}
\newcommand{\cC}{\mathcal{C}}
\newcommand{\opt}{{\sf OPT}}
\newcommand{\oracle}{{\sc Fair-Centers}}
\newcommand{\E}{\mathbb{E}}
\begin{document}

\maketitle

\begin{abstract}
    Center-based clustering (e.g., $k$-means, $k$-medians) and clustering using linear subspaces are two most popular techniques to partition real-world data into smaller clusters.
    However, when the data consists of sensitive demographic groups, significantly different clustering cost per point for different sensitive groups can lead to fairness-related harms (e.g., different quality-of-service).
    The goal of socially fair clustering is to minimize the maximum cost of clustering per point over all groups.
    In this work, we propose a unified framework to solve socially fair center-based clustering and linear subspace clustering, and give practical, efficient approximation algorithms for these problems.
    We do extensive experiments to show that on multiple benchmark datasets our algorithms either closely match or outperform state-of-the-art baselines. 
\end{abstract}
\section{Introduction}
\label{sec:problem}
Given a set of $n$ data points in a $d$-dimensional space described by their feature representation, the goal of clustering is to partition these data points into $k$ disjoint parts or clusters so that the points in the same cluster are close to each other or close to a well-defined structure. 
Clustering has a wide range of applications in computer science \cite{Karl1994}, computational biology \cite{HN2000}, market segmentation \cite{Haben2016AnalysisAC}, social science \cite{MSST2007}, field robotics \cite{BSLUL2011}, and many more. Clustering point clouds using subspaces has applications in computer vision \cite{Ho2003clustering,Vidal2010tutorial} and face recognition \cite{Ho2005acquiring,Batur2004segmented}. Subspaces fitting high-dimensional word and sentence embeddings have been shown to reveal interesting linguistic phenomena \cite{Mu2017geometry,Mu2017representing}.
Previous work has shown that high-dimensional representations of text data such as commonly used word embeddings contain societal biases \cite{bolukbasi2016man}, and data mining and algorithmic decision-making on big data can have potentially adverse social and economic impact on individuals and sensitive demographic groups (e.g., race, gender) \cite{Barocas2016big}. As a result, various fairness-constrained clustering objectives have been proposed and studied in recent literature as follows.

It is a challenging problem in itself to efficiently determine the right number of clusters purely from the data \cite{PellegM2000xmeans,HamerlyE2003learning}.  
As a result, in many clustering objectives, the desired number of clusters $k$ is given as a part of the input.
In center-based clustering objectives, the objective is to find $k$ points or centers that minimize the average distance of any data point to its nearest center. We use the Euclidean distance in all our results. Using $z$-th power of the Euclidean distance metric gives rise to well-known clustering objectives such as $k$-median ($z=1$), $k$-means ($z=2$), $k$-center ($z=\infty$).
More formally, consider the following center-based clustering problem,
\begin{definition}[\textbf{$(k,z)$ clustering}]
\label{def:kz_clustering}
	Given a set $X$ of $n$ points in $\R^d$, a weight function $w :  X \rightarrow \R_{>0}$, a positive integer $k$ and a real number $z \in [1, \infty)$, the objective of $(k,z)$ clustering is to find $k$ centers $C = \paren{c_1, \ldots, c_k}$, with each $c_i \in \R^{d}$, that minimize the weighted cost of clustering defined as
	\[
	\cost(C,X,w) := \left(\sum_{x \in X} w(x)\cdot d(C,x)^z\right)^{1/z},\quad
	\text{where}~~d(C,x):=\min_{i\in [k]} d(c_i,x) = \min_{i \in [k]}\|c_i - x\|_2.
	\]
\end{definition}
We consider uniform weights for all the points in $X$, that is, $w(x) = \frac{1}{|X|}$, unless stated otherwise.

In clustering using $k$ subspaces, the objective is to find $k$ linear (or affine) subspaces of dimension at most $q$ (where $q < d$) that minimize the average distance of any data point to its nearest subspace. We consider the case where the subspaces are linear subspaces. Here the distances from a point to the linear subspaces are the lengths of orthogonal projections (in the $\ell_2$ norm) of the points in linear subspace, to the power $z$, for $z \ge 1$.
This then gives the linear subspace clustering problem.
\begin{definition}[\textbf{$(q,k,z)$ linear subspace clustering}]
\label{def:qkz_clustering}
    Given a set $X$ of $n$ points in $\R^d$, a weight function $w :  X \rightarrow \R_{>0}$, number of clusters $k$, the dimension $q \le d$, and a real number $z \in [1, \infty)$, the objective of $(q,k,z)$ linear subspace clustering is to find $k$ linear subspaces $V = \paren{V_1, V_2, \ldots, V_k}$, where each $V_i$ is of dimension at most $q$, so as to minimize the cost of clustering defined as
	\[
	\cost(V,X,w) := \left(\sum_{x \in X} w(x)\cdot d(V,x)^z\right)^{1/z},\quad
	\text{where}~~d(V,x):=\min_{i\in [k]} d(V_i,x) = \|x^TZ_i\|_2.
	\]
\end{definition}
where $Z_i$ is orthogonal projection matrix corresponding to $V_i$, for every $i \in [k]$.
The $k=1$ case corresponds to dimension reduction by fitting a low-dimensional subspace to the given data, popularly done via Principal Component Analysis (PCA).
Even in this case we consider uniform weights.

As large data sets are often high-dimensional and also have a large number of points, a popular technique relevant to our paper is to construct a small \emph{coreset} of the given data and perform clustering on the coreset. 
A coreset is a small weighted sample of the data such that clustering cost for this smaller set of points gives a $(1+\varepsilon)$ approximation to the clustering cost for the entire data. 
Following the unified framework for clustering using coresets proposed in \cite{FL2011}, recent works \cite{HV2020, CSS2021} have constructed coresets for $k$-means or $k$-medians clustering of size {\sf poly}$\left(k, 1/\varepsilon\right)$, independent of $n$ and $d$.

In many clustering applications the points being clustered are from different socially salient groups (eg., based on race, gender, age, education). Let us assume that we also have this group information of the points.
That is, the set of points can be partitioned into $\ell$ disjoint groups. 
Let us denote the partitions of $X$ based on these groups as $X_1, \ldots, X_{\ell}$.
Then the clustering objective defined above may result in harmful treatment of some of groups, as captured by the following notions of fairness in clustering.

\paragraph{Fair Clustering.}
To alleviate representational harms to different demographic groups in a cluster, a natural objective for fair clustering is to ensure that in each cluster, the proportion of any group in that cluster should be nearly the same as its proportion in the overall population \cite{CKLV2017}. Subsequent work has generalized this to clustering objectives with upper and lower bound constraints on the group-wise representation in each cluster \cite{bera2019fair,Schmidt2019CoresetsFC}. We do not study the above objective but since our work is closely related to coresets, we note that for the above fair $k$-means and fair $k$-median objectives, a coreset of size $\mathcal{O}\left(\Gamma\varepsilon^{-d}k^2\right)$ can be constructed by a deterministic algorithm \cite{HJV2019}, and a coreset of size {\sf poly}$\left(\Gamma, \log n, k, 1/\varepsilon\right)$ can be constructed by a sampling-based algorithm \cite{BFS2020}, where $\Gamma$ is the number of types of items (i.e., number of disjoint groups). It is important to keep in mind that the above fair clustering objective can be inadequate when the benefits or harms of clustering for different demographic groups are better represented by their clustering cost rather than their proportional representation \cite{GSV2021}. 

\paragraph{Socially Fair Clustering.}
To alleviate quality-of-service harms to different demographic groups, it is natural to define fairness in terms of the group-wise clustering cost. In \textit{socially fair} clustering, the goal is to minimize the maximum average group-wise cost.
Formally we have the following socially fair variant of \Cref{def:kz_clustering}.
\begin{definition}[\textbf{socially fair $(k,z)$ clustering}]
\label{def:sfair_kz}
	Given a set $X$ of $n$ points in $\R^d$ divided into $\ell$ disjoint groups \\$(X_1,\ldots,X_\ell)$, a positive integer $k$ and a real number $z \in [1, \infty)$, the objective of socially fair $(k, z)$ clustering is to find $k$ points $C = \paren{c_{1}, c_{2}, \dotsc, c_{k}}$, with each $c_{i} \in \R^d$, that minimize the maximum per-point clustering cost, $\cost(C, X_{j})$, for clustering any group $X_{j}$ using $C$.
	We represent socially fair $(k,z)$ clustering cost as
	\[
	\fcost(C,X) := \max_{j \in [\ell]} \cost(C, X_j ).
	\]
	We call $(c_1, c_2, \dots, c_k)$ the socially fair centers.
\end{definition}

Socially fair $k$-means clustering has been introduced and studied in \cite{GSV2021} and \cite{ABV2021} simultaneously.
\cite{ABV2021} provide an $\cO\paren{\ell}$ approximation algorithms, whereas \cite{GSV2021} give a Lloyd-like heuristic algorithm that performs well in practice.
Later \cite{MV2021} presented a polynomial time $\paren{e^{\cO\paren{z}}\frac{\log \ell}{\log \log \ell}}$-approximation algorithm for the more general socially fair $(k,z)$ clustering that we study.
\cite{CMV2022} study a more general fair clustering objective where the objective is to minimize the $\ell_p$ norm of the average group-wise clustering costs.
When $p = \infty$ this is nothing but the socially fair clustering cost.
They also achieve the same bound as \cite{MV2021}.
Recently \cite{GJ2021} came up with $(33+\varepsilon)$ and $(5 + \varepsilon)$ approximation algorithms for socially fair $k$-means and $k$-medians clustering respectively, which run in time $\paren{k/\varepsilon}^{\cO(k)}n^{\cO(1)}$.
In an independent and concurrent work \cite{GSV2022} came up with a $(5 + 2\sqrt{6}+\varepsilon)$ approximation algorithm that runs in time $n^{2^{O(z)}\cdot\ell^2}$ and a $(15+6\sqrt{6})$ approximation algorithm that runs in time $k^\ell\cdot {\sf poly}(n)$ for the socially fair $(k,z)$ clustering problem.

We also have the following socially fair variant of the linear subspace clustering.
\begin{definition}[\textbf{socially fair $(q,k,z)$ linear subspace clustering}]
\label{def:sfair_qkz}
	Given a set $X$ of $n$ points in $\R^d$ divided into $\ell$ disjoint groups $(X_1,\ldots,X_\ell)$, a positive integer $k$, the dimension $q < d$, and a real number $z \in [1, \infty)$, the goal of socially fair $(q,k,z)$ linear subspace clustering is to find $k$ linear subspaces $V  = \paren{V_1, V_2, \ldots, V_k}$ of dimension at most $q$ in $\R^d$ so as to minimize the maximum per-point clustering cost, $\cost(V, X_j)$, for clustering any group $X_j$ using $V$.
	We represent socially fair $(q,k,z)$ linear subspace clustering cost as
	\[
	\fcost(V,X) := \max_{j \in [\ell]} \cost(V, X_j).
	\]
	We call $(V_1, V_2, \dots, V_k)$ the socially fair linear subspaces.
\end{definition}
In case of socially fair $(q,k,z)$ linear subspace clustering with $k = 1$, the optimal solution can be obtained efficiently by Principal Component Analysis (PCA), but applying PCA on the data in a group-blind fashion might result in inequitable group-wise costs, as observed in \cite{STMSV2018}. 
We study the objective of socially fair linear subspace clustering using $k$ subspaces of dimension $q$ each so as to minimize the maximum average group-wise clustering cost defined using $z$-th power of distances.
In \cite{GSV2021}, the authors study a slightly different problem, where the goal is to find optimal subspace such that among all the groups, the increase in the per point cost of the group due to fairness constraints compared to PCA of that group is minimized.
To the best of our knowledge, we are the first to study the socially fair linear subspace clustering problem.

\paragraph{Our Contributions.} The main contributions of our paper can be summarized as follows,

\begin{itemize}[leftmargin=*]
    \item We propose a generic framework (\Cref{alg}), where we first assume that there is an oracle that, given a $k$-partitioning of the data, outputs $k$ centers (or linear subspaces) such that the socially fair center-based (or linear subspace) clustering cost with respect to these centers (subspaces) is at most $\alpha$ times the optimal socially fair clustering cost for this $k$-partitioning. 
	Then our framework gives an $\alpha(1+\varepsilon)$-approximation algorithm to the socially fair clustering problem, given access to an appropriate strong $\varepsilon$-coreset construction algorithm and the $\alpha$ approximate oracle (\Cref{thm:framework}).
	\item If the best known strong coreset construction algorithm for an unconstrained clustering problem outputs a coreset $S$, we show that we can construct a strong coreset of size $\ell\cdot |S|$ for both socially fair clustering. Here $\ell$ is the number of groups the points belong to (\Cref{thm:coreset}).
	\item We also give constructions of efficient oracles, which when used in our framework give a $(1+\varepsilon)$ approximation algorithm for socially fair center-based clustering (\Cref{thm:kz}) and a $\sqrt{2}\ell^{\frac{1}{z}}\gamma_z(1+\varepsilon)$ approximation algorithm for socially fair linear subspace clustering, for $z \geq 2$, where $\gamma_z \approx \sqrt{z/e}(1+o(1))$ (\Cref{thm:qkz}). 
	\item We also propose a Lloyd-like heuristic algorithm (\Cref{alg:algo2})  to perform socially fair clustering efficiently for  practical purposes.
	\item We show experimental results on many benchmark datasets such as Credit Card, Adult Income, German Credit, Bank, and Skill Craft datasets and compare the results with unconstrained as well as fair baselines wherever applicable (\Cref{sec:experiments}).
\end{itemize}

\section{Framework}\label{sec:framework}
Recall that we are given a set $X$ of $n$ points in $\R^d$. 
A coreset is a small weighted sample of the data that closely approximates the clustering cost of the data.
Hence, clustering algorithms run much faster on coresets.
In this section, we present two of our main results.
We first show that we can construct a coreset for the socially fair clustering objective, given coresets for the unconstrained clustering objective for each of the groups.
We remark that previous works have constructed coresets for other notions of fairness \cite{HJV2019, BFS2020} in clustering but not for socially fair clustering. 
Our second result is a framework 
(see \Cref{alg}) that can be used to solve both variants of the socially fair clustering problems described in \Cref{sec:problem}.
More formally, consider the following definition of coresets for clustering.

\begin{definition}[\textbf{strong coresets for $(k,z)$ clustering}]
	Given a set $X$ of $n$ points and a constant $\varepsilon > 0$ with weight function $w : X \rightarrow \R_{> 0}$, a weighted sample $S$ of the points with weight function $\hat{w} : S \rightarrow \R_{> 0}$ is a strong $\varepsilon$-coreset of $X$ for the $(k,z)$ clustering problem if for every  $k$ centers $C = \paren{c_1, \ldots, c_k}$ such that $c_i \in \R^d, \forall i \in [k]$, 
	\[
	\cost(C,S,\hat{w}) \in (1\pm \varepsilon) \cdot\cost(C, X, w).
	\]
\end{definition}

\begin{definition}[\textbf{strong coresets for $(q,k,z)$ linear subspace clustering}]
	Given a set $X$ of $n$ points and a constant $\varepsilon > 0$ with weight function $w : X \rightarrow \R_{> 0}$, a weighted sample $S$ of the points with weight function $\hat{w} : S \rightarrow \R_{> 0}$ is a strong $\varepsilon$-coreset for the $(q,k,z)$ linear subspace clustering problem if for every $k$ linear subspaces $V = \paren{V_1, V_2, \ldots, V_k}$, where each $V_i$ is a linear subspace in $\R^d$ of dimension at most $q$,
	\[
	\cost(V,S,\hat{w}) \in (1\pm \varepsilon) \cdot\cost(V, X, w).
	\]
\end{definition}

Henceforth, whenever we say coreset we refer to a strong coreset. 
Recall that we are additionally given that the points belong to $\ell$ disjoint groups.
Given the partition of the data based on these groups, $X_1, \ldots, X_{\ell}$, the key observation we first make is that a union of coresets for each of the groups for the unconstrained clustering cost is also a coreset for the entire data for the socially fair clustering cost.
This is true both for socially fair $(k,z)$ clustering as well as socially fair $(q,k,z)$ linear subspace clustering. Note that a union of coresets for groups has been used as a coreset for other variants of fair clustering (e.g., \cite{HJV2019}, for the problem of $(\alpha, \beta)$-proportionally-fair clustering).
\begin{theorem}[coresets for socially fair clustering]
\label{thm:coreset}
	Let $S_j$ be a strong $\varepsilon$-coreset for $X_j$, for each $j \in [\ell]$, with respect to the clustering cost. Then, $S = \bigcup_{j=1}^{\ell} S_j$ is a strong $\varepsilon$-coreset for the entire data $X$ with respect to the socially fair clustering cost.
\end{theorem}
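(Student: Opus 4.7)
The plan is to reduce the claim to a straightforward per-group application of the coreset guarantee, combined with the fact that multiplicative $(1\pm\varepsilon)$ approximation is preserved under taking maxima. The statement is phrased for ``the socially fair clustering cost'' uniformly; I would actually prove the two cases (center-based and linear subspace) in one stroke since the only property used is that $\cost(\cdot, X_j, w)$ is a group-wise sum that the coreset $S_j$ of $X_j$ approximates for every admissible choice of centers or subspaces.

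First I would fix an arbitrary choice $C$ of $k$ centers (resp. $k$ linear subspaces $V$ of dimension at most $q$). For each $j \in [\ell]$, by the hypothesis that $S_j$ is a strong $\varepsilon$-coreset of $X_j$ for the relevant unconstrained clustering problem, I would invoke the coreset guarantee to conclude
\[
(1-\varepsilon)\,\cost(C,X_j,w) \;\le\; \cost(C,S_j,\hat w) \;\le\; (1+\varepsilon)\,\cost(C,X_j,w).
\]
Next I would combine these inequalities across $j$. For the upper bound, every term is dominated by $(1+\varepsilon)\max_j \cost(C,X_j,w)$, so in particular the maximum over $j$ of the left side is dominated by the same quantity. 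For the lower bound, I would pick $j^\star \in \argmax_j \cost(C, X_j, w)$ and observe that
\[
\max_{j}\cost(C,S_j,\hat w) \;\ge\; \cost(C,S_{j^\star},\hat w) \;\ge\; (1-\varepsilon)\,\cost(C,X_{j^\star},w) \;=\; (1-\varepsilon)\,\fcost(C,X).
\]
Since the $S_j$ are disjoint subsamples drawn from disjoint groups, evaluating $\fcost(C,S)$ on the union $S$ with the inherited group partition $(S_1,\ldots,S_\ell)$ and weights $\hat w$ gives exactly $\max_j \cost(C,S_j,\hat w)$, so both bounds combine to $\fcost(C,S) \in (1\pm\varepsilon)\,\fcost(C,X)$. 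Taking this over all admissible $C$ (or $V$) yields the claim.

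The main ``obstacle'' is essentially bookkeeping: I must make sure that the coreset guarantee we need for each $S_j$ is exactly the one we are assuming, namely that it holds uniformly over all $k$ centers in $\R^d$ (for the center-based case) or all $k$-tuples of at-most-$q$-dimensional linear subspaces in $\R^d$ (for the linear subspace case). This matters because the optimal unconstrained centers for a single group $X_j$ will generally differ from those chosen for the socially fair objective on $X$; the strong coreset property is precisely what allows us to use a single $S_j$ against the same $C$ that is being evaluated on the whole data. No further quantitative estimate is needed, and the argument is oblivious to whether the objective is $(k,z)$ or $(q,k,z)$ linear subspace clustering.
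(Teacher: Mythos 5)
Your proposal is correct and follows essentially the same route as the paper's proof: apply the per-group strong coreset guarantee to an arbitrary fixed $C$ (or $V$), then pass the $(1\pm\varepsilon)$ factors through the maximum over groups; your explicit handling of the lower bound via $j^\star$ is just a spelled-out version of the paper's step $\max_j (1-\varepsilon)\cost(C,X_j) = (1-\varepsilon)\max_j \cost(C,X_j)$. No gaps.
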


\begin{proof}[Proof of \Cref{thm:coreset}]
	Let $S_j$ be a strong $\varepsilon$-coreset for the points belonging to group $j \in [\ell]$ with respect to the clustering cost. Then for any $k$ centers $C = \paren{c_1, \ldots, c_k}$, let $X_1, X_2, \ldots, X_k$ be clustering obtained by assigning the points in $X$ to their closest centers in $C$. Then we have,
	\[
	(1-\varepsilon)\cdot \cost(C,X_j) \leq \cost(C, S_j) \leq (1+\varepsilon)\cdot\cost(C, X_{j}). 
	\]
	Now let $S = \bigcup_{j=1}^{\ell} S_j$.
	For the same cluster centers and  partitioning of the data we have that,
	\begin{align*}
	\fcost(C,S) = \max_{j \in [\ell]} &\cost(C, S_j) \\
	\implies \max_{j \in [\ell]} (1-\varepsilon)\cdot \cost(C, X_j) \le \fcost&(C,S) \le \max_{j \in [\ell]} (1+\varepsilon)\cdot \cost(C, X_j) \quad\because S_j\text{ is an $\epsilon$ coreset of }X_j\\
	\implies (1-\varepsilon)\cdot\max_{j \in [\ell]}  \cost(C, X_j)  \le \fcost&(C,S) \le (1+\varepsilon)\cdot\max_{j \in [\ell]}  \cost(C, X_j) \\
	\implies (1-\varepsilon)\cdot \fcost(C, X)\le \fcost&(C,S) \le (1+\varepsilon)\cdot \fcost(C, X).
	\end{align*}
	
	Therefore $S$ is an $\varepsilon$-coreset for the whole dataset w.r.t. the socially fair clustering cost. The same arguments also work for socially fair $(q,k,z)$ linear subspace clustering.
\end{proof}
\begin{algorithm}[t]
\caption{Framework for the  socially fair clustering}
\label{alg}
\begin{algorithmic}[1]
\Require The set of points $X$ and their group memberships, the numbers $k$, $z$.

\Ensure The cluster centers $C$.

\State Compute $S_j$, a strong $\varepsilon/3$-coreset for group $j, \forall j\in[\ell]$.
\State Let $S := \bigcup_{j\in[\ell]} S_j$.
\For {each $k$ partitioning of the coreset $\cP(S, k) := \paren{P_1(S), P_2(S), \ldots, P_k(S)}$}
\State $C := \text\oracle(\cP(S, k))$.
\State $t := \max_{j \in [\ell]} \cost(C,S_j)$.
\EndFor
\State \Return The centers $C$ with minimum value of $t$.
\end{algorithmic}
\end{algorithm}

As a consequence of \Cref{thm:coreset}, we propose an iterative framework to solve socially fair clustering (see \Cref{alg}).
The framework crucially depends on the existence of an efficient algorithm called \oracle, that, given a set of clusters, outputs $\alpha$-approximate socially fair centers. 
Then the socially fair clustering cost output by this function is at most $\alpha$ times the optimal socially fair clustering cost, where $\alpha \ge 1$.
A similar framework can also be used for socially fair $(q,k,z)$ linear subspace clustering (see Algorithm~\ref{framework_subspace} in the appendix).

\paragraph{Overview of the framework.}
Let us assume the centers $C$ here are the points in $\R^d$. The following arguments also hold for the centers that are linear subspaces.
Let us denote the partitions of $X$ based on the group information as $X_1, \ldots, X_{\ell}$. 
We first compute a strong coreset $S_j$ for each of the groups $j \in [\ell]$, corresponding to the $(k,z)$ clustering cost.
Then by \Cref{thm:coreset}, $S = \cup_{j \in [\ell]} S_j$ gives us a strong coreset for the data for socially fair $(k,z)$ clustering cost.
Then let $\cP(S, k) = \paren{P_1(S), P_2(S), \ldots, P_k(S)}$ denote a partitioning of the points $S$ into $k$ clusters and let $S_{ij}$ denote the set of points belonging to cluster $i$ and group $j$.
That is, $S_{ij} = P_i(S \cap X_j)$ for all $i \in [k]$ and $j \in [\ell]$.
We then iterate over all possible $k$-clusters ($k$-partitioning) of the coreset $S$.
In each iteration we call the algorithm \oracle~with the current clusters of $S$.
It gives $\alpha$-approximate socially fair centers for the coreset $S$ (we show an efficient implementation of \oracle~for different clustering objectives in \Cref{sec:cvxprog}).
We return the socially fair cluster centers $C$ with the minimum $\fcost(C,S)$.
If \oracle~ has a running time $t\paren{S, \alpha}$, the running time of this framework is $\cO\paren{k^{|S|}\cdot t\paren{S, \alpha}}$.
Therefore, we have the following result stated for socially fair center-based clustering.

\begin{theorem}
\label{thm:framework}
	For a set $X$ of $n$ points in $\R^d$ belonging to $\ell$ disjoint groups, $X_1,\ldots,X_{\ell} \subseteq X$, let $S_j$ be a strong $\varepsilon/3$-coreset for group $j$ for some $\varepsilon \in [0,1]$, $\forall j \in [\ell]$. Let $t\paren{S, \alpha}$ be the time taken by \oracle~that gives an $\alpha$-approximate socially fair cluster centers, given clusters. Then there exists an $\alpha(1+\varepsilon)$ approximation algorithm for the socially fair clustering problem that runs in time $\cO\paren{k^{|S|}\cdot t\paren{S, \alpha}}$ where $S = \bigcup_{j=1}^{\ell} S_j$. 
\end{theorem}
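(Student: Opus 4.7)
The plan is to combine the coreset guarantee from Theorem~\ref{thm:coreset} with an enumeration argument over partitions of $S$ and the $\alpha$-approximation guarantee of \oracle. Let $C^{*}$ denote an optimal set of socially fair centers for $X$, and let $\opt := \fcost(C^{*}, X)$. First, by Theorem~\ref{thm:coreset}, the union $S = \bigcup_{j \in [\ell]} S_j$ is a strong $\varepsilon/3$-coreset of $X$ for the socially fair clustering cost, so for \emph{every} choice of $k$ centers $C$ we have
\[
(1-\varepsilon/3)\cdot \fcost(C, X) \;\leq\; \fcost(C, S) \;\leq\; (1+\varepsilon/3)\cdot \fcost(C, X).
\]

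Next, I would use the fact that the algorithm enumerates every $k$-partition of $S$. Among these there is one specific partition $\cP^{\star}(S,k)$, namely the one induced on $S$ by assigning each coreset point to its nearest center in $C^{*}$. For this particular partition, the centers $C^{*}$ themselves form a feasible solution, so the optimal socially fair cost \emph{restricted to} the partition $\cP^{\star}(S,k)$ is at most $\fcost(C^{*}, S)$. By the definition of \oracle, the centers $C^{\star}$ returned by \oracle on input $\cP^{\star}(S,k)$ satisfy
\[
\fcost(C^{\star}, S) \;\leq\; \alpha \cdot \fcost(C^{*}, S) \;\leq\; \alpha(1+\varepsilon/3)\cdot \opt.
\]
Since the algorithm returns the centers minimizing $\fcost(\cdot,S)$ over all iterations, the final output $C$ satisfies $\fcost(C,S) \leq \fcost(C^{\star}, S)$. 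Pulling back through the coreset inequality,
\[
\fcost(C,X) \;\leq\; \frac{\fcost(C,S)}{1-\varepsilon/3} \;\leq\; \frac{\alpha(1+\varepsilon/3)}{1-\varepsilon/3}\cdot \opt \;\leq\; \alpha(1+\varepsilon)\cdot \opt,
\]
where the final inequality uses $(1+\varepsilon/3)/(1-\varepsilon/3) \leq 1+\varepsilon$ for $\varepsilon \in [0,1]$, which follows since $(1+x)/(1-x) = 1 + 2x/(1-x) \leq 1 + 3x$ whenever $x \leq 1/3$.

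For the running time, the number of $k$-partitions of the coreset $S$ is at most $k^{|S|}$, since each point is assigned to one of $k$ parts. Each iteration performs one call to \oracle in time $t(S,\alpha)$ and an $O(|S|\cdot k)$ cost evaluation, which is dominated by $t(S,\alpha)$. The total running time is therefore $O(k^{|S|} \cdot t(S,\alpha))$, as claimed. The same argument applies verbatim to socially fair $(q,k,z)$ linear subspace clustering by replacing centers with subspaces throughout.

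The only delicate point is justifying that the enumeration over all $k$-partitions of $S$ indeed contains a partition against which $C^{*}$ is competitive; the key observation is that we do not need the oracle to ever be handed a partition induced by $C^{*}$ on $X$ (that partition lives in a much larger space) — it suffices that the restriction of any such partition to $S$ appears somewhere in the enumeration, which is immediate. Everything else is a direct plug-in of Theorem~\ref{thm:coreset} and the definition of the oracle, so I do not anticipate any substantial technical obstacle.
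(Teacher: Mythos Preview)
Your proposal is correct and follows essentially the same approach as the paper: apply the coreset bound from Theorem~\ref{thm:coreset} on both ends, sandwich the algorithm's output against an optimal solution on $S$, and use $(1+\varepsilon/3)/(1-\varepsilon/3)\le 1+\varepsilon$. The only cosmetic difference is that the paper routes through the globally optimal centers $\widehat{C}$ on $S$ (using $\fcost(\widehat{C},S)\le\fcost(C^*,S)$), whereas you go directly to the partition of $S$ induced by $C^*$; both paths yield $\fcost(\overline{C},S)\le\alpha\,\fcost(C^*,S)$ and are equivalent in substance.
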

\begin{proof}[Proof of \Cref{thm:framework}]
We use the framework given in \Cref{alg} with the appropriate \oracle~function for the socially fair clustering objective.
Let $C^*$ be the optimal centers for clustering the data $X$, then,
\begin{equation}
\label{opt:data}
    \fcost(C^*,X) \le \fcost(C,X), \quad \forall C \in \R^{k\times d}.
\end{equation}
Similarly, let 
$\widehat{C}$ be the optimal centers for clustering the coreset $S$, then,
\begin{equation}
\label{opt:coreset}
    \fcost(\widehat{C},S) \le \fcost(C,S), \quad \forall C \in \R^{k\times d}.
\end{equation}
Let $\overline{C}$ be the centers returned by \Cref{alg}, and $C'$ be the centers given by \oracle~for the optimal clustering on $S$.
Then,
\begin{equation}
\label{eq:oracle}
    \fcost(\overline{C},S) \le \fcost(C',S) \le \alpha\cdot \fcost(\widehat{C},S).
\end{equation}
Consequently, $\fcost(\overline{C},X)  $
\begin{align*}
    &\le \frac{1}{1-\varepsilon/3}\fcost(\overline{C},S) &(\text{from }\Cref{thm:coreset})\\
    &\le \frac{\alpha}{1-\varepsilon/3}\fcost(\widehat{C},S) &(\text{from }\Cref{eq:oracle})\\
    &\le \frac{\alpha}{1-\varepsilon/3}\fcost(C^*,S)
    &(\text{from }\Cref{opt:coreset})\\
    &\le \frac{\alpha(1+\varepsilon/3)}{1-\varepsilon/3}\fcost(C^*,X)
    &(\text{from }\Cref{thm:coreset})\\
    &\le \alpha(1+\varepsilon)\fcost(C^*,X).
\end{align*}

Moreover, the running time of the algorithm is $\cO\paren{k^{|S|} \cdot t\paren{S, \alpha}}$ since we are iterating over all possible $k$-partitioning of the coreset $S$ and each iteration calls \oracle~once, which takes time $t\paren{S, \alpha}$.
\end{proof}

\section{Implementation of \oracle}\label{sec:cvxprog}

In this section, we give an exact implementation of \oracle~for socially fair $(k,z)$ clustering and an approximate one for socially fair $(q,k,z)$ linear subspace clustering. We do both by formulating them as convex programs that can be solved efficiently using the well known Ellipsoid method.
Note that in \Cref{alg:algo2} we call the function \oracle~for the 
$\varepsilon$-coreset $S$ corresponding to the given dataset $X$ for the socially fair clustering objective.
We show the implementation of the \oracle~for $S$ but it works for any subset of $X$.

\paragraph{Socially Fair $(k,z)$-Clustering.}
Recall that $S_{i j} = P_i(S) \cap S_j$.
Then the problem of finding $k$ centers that minimize the socially fair $(k,z)$ clustering cost, given the clusters, can be expressed as the following convex program, 
\begin{equation}
    \min_{\beta\in \mathbb{R}, c_1,\cdots,c_k \in \mathbb{R}^d} \:\:\:\: \beta,\label{eq:cvxprog1}
\end{equation}
\begin{equation*}
    \text{such that}\quad  \frac{1}{|S_j|}\sum\limits_{i \in [k]} \sum_{x \in S_{i j}} \|x-c_i\|^z \le \beta, \:\:\:\: \forall j \in [\ell].
\end{equation*}
It is easy to see that the optimal solution to this convex program $\beta^*, c_1^*, c_2^*, \ldots, c_k^*$ gives socially fair clustering cost $\beta^*$, with the centers $\paren{c_1^*, c_2^*, \ldots, c_k^*}$. Let $L$ be the bit complexity of the input, that is, each point $x$ is a vector in $\Q^d$ and all the rational numbers are represented as fractions where the values of numerator and denominator can be represented using at most $L$ bits. 
\begin{theorem}
\label{thm:kz}
    Given a set of $n$ points in $\R^d$, with bit complexity $L$, that are partitioned into $\ell$ groups, numbers $k \in \Z_{\ge 0}$, $z \ge 1$, an algorithm with running time $T_S$ to compute a strong $\varepsilon$-coreset $S$ for socially fair $(k,z)$ clustering, there exists a $(1+\varepsilon)$-approximation algorithm to socially fair $(k,z)$ clustering that runs in time
    \[
    \widetilde{\cO}\paren{k^{|S|}\cdot\paren{k^2d^2+\ell kL^2(d+z)|S|}\cdot k^2d^2\cdot \paren{Lz+d+\log|S|}^2 + T_S}.
    \]
\end{theorem}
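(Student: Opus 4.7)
\textbf{Proof proposal for Theorem~\ref{thm:kz}.}

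The plan is to combine the framework of Theorem~\ref{thm:framework} with an \emph{exact} implementation of \oracle~via the convex program~\eqref{eq:cvxprog1}, using the Ellipsoid method. The core observation is that, given a fixed partition $\cP(S,k)$ of the coreset, the map $c_i \mapsto \|x-c_i\|_2^z$ is convex for every $z \geq 1$ (norms are convex and the scalar power $t \mapsto t^z$ is non-decreasing and convex on $\R_{\geq 0}$), so each of the $\ell$ constraints in~\eqref{eq:cvxprog1} is convex. Hence~\eqref{eq:cvxprog1} is a convex program in the $kd+1$ variables $(\beta, c_1, \ldots, c_k)$ whose optimum equals $\min_{C \in \R^{k \times d}} \max_{j \in [\ell]} \cost(C, S_j)$ restricted to the partition; solving it therefore implements \oracle~with approximation factor $\alpha = 1$. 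Plugging $\alpha = 1$ into Theorem~\ref{thm:framework} already yields the $(1+\varepsilon)$ approximation guarantee; what remains is a careful running-time accounting.

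Next I would separate the runtime into the three contributions that appear in the final bound. First, the coreset construction contributes $T_S$ additively. Second, the framework iterates over all $k$-partitions of $S$, contributing the outer factor $k^{|S|}$. Third, each such iteration invokes \oracle~once, and I would bound this by a standard Ellipsoid analysis. The convex program has $n' = kd+1$ variables whose optimum has bit complexity bounded by a polynomial in $L$, $z$, $d$ and $\log|S|$ (the objective is a sum of at most $|S|$ rationals each of bit complexity $O(Lz)$), so the Ellipsoid method terminates after $O\bigl((kd)^2 (Lz + d + \log|S|)^2\bigr)$ iterations; this accounts for the $k^2 d^2 \cdot (Lz+d+\log|S|)^2$ factor. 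Each iteration performs an ellipsoid update costing $O((kd)^2) = O(k^2 d^2)$ arithmetic operations, plus one call to a separation oracle: for a candidate $(\beta, c_1,\ldots,c_k)$, the oracle checks each of the $\ell$ constraints, and evaluating the $j$-th constraint requires summing $\|x-c_i\|_2^z$ over the points of $S_j$ split among the $k$ clusters, i.e.\ $O(k|S|(d+z))$ arithmetic operations per constraint in the worst case, each on numbers of bit length $O(L)$ so that each operation costs $\widetilde{O}(L^2)$. This gives the $\ell k L^2 (d+z)|S|$ term inside the per-iteration cost $k^2 d^2 + \ell k L^2 (d+z) |S|$.

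Combining the three contributions gives precisely the claimed bound
\[
\widetilde{\cO}\!\paren{k^{|S|}\cdot\paren{k^2d^2+\ell kL^2(d+z)|S|}\cdot k^2d^2\cdot \paren{Lz+d+\log|S|}^2 + T_S}.
\]
The main obstacle I anticipate is the precision/bit-complexity analysis needed to ensure that the Ellipsoid method returns a solution of sufficient quality that, together with the $\varepsilon/3$-coreset bound, yields a true $(1+\varepsilon)$-approximation. Concretely, one has to argue that a polynomially small additive error in the convex-program optimum can be absorbed into the $\varepsilon$ slack by running the Ellipsoid for enough iterations (which is why the $(Lz+d+\log|S|)^2$ factor appears), and that intermediate arithmetic can be performed with $O(L)$-bit rationals without blowing up. Once this accounting is done, the guarantee of Theorem~\ref{thm:framework} transfers verbatim, since $\alpha = 1$ for the exact convex-program oracle and the only loss is the $(1+\varepsilon/3)/(1-\varepsilon/3) \leq 1+\varepsilon$ factor already absorbed in the framework's analysis.
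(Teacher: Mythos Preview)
Your proposal is correct and follows essentially the same route as the paper: instantiate Theorem~\ref{thm:framework} with $\alpha=1$ by solving~\eqref{eq:cvxprog1} via the Ellipsoid method, and account for the running time using the dimension $m=\Theta(kd)$, the separation/first-order oracle cost $\cO(\ell k L^2(d+z)|S|)$ (the paper's Lemma~\ref{lem:seperation_oracle}), and the iteration bound coming from $\log(R/r\cdot (u_0-l_0)/\varepsilon)=\cO(Lz+d+\log|S|)$. The only cosmetic differences are that the paper does an explicit binary search on $\beta$ with Ellipsoid feasibility tests on the sublevel sets $\widehat{K}$ (rather than minimizing directly in $kd+1$ variables), and it spells out the bounds $l_0=0$, $u_0=|S|\cdot\max_{x,y}\|x-y\|^z=2^{\cO(Lz+d+\log|S|)}$, $R=2u_0$, $r=2^{-\cO(L)}$ that you gesture at; your anticipated ``obstacle'' is exactly what these bounds resolve.
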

To the best of our knowledge, the best known coreset size for the $(k,z)$ clustering is, $|S| = \widetilde{\mathcal{O}}\paren{\text{min}\{\varepsilon^{-2z-2},2^{2z}\varepsilon^{-4}k\}\ell k}$ by \cite{HV2020}. 
The algorithm to construct this coreset runs in time $\widetilde{\mathcal{O}}\paren{ndk}$.
Moreover, \cite{MMI2019} show that the cost of any clustering is preserved upto a factor of $(1+\varepsilon)$ under a projection onto a random
$\cO(\log(k/\varepsilon)/\varepsilon^2
)$-dimensional subspace.
Therefore, the running time of our algorithm is $\widetilde{\cO}(ndk) + 2^{\cO(\varepsilon^{-z}k^3\ell \log L )}$.
Note that even for unconstrained $k$-means clustering problem, the best known $(1+\varepsilon)$ approximation algorithm runs in time exponential in $k$ (see Theorem 1 in \cite{BJK2018}).

We use the Ellipsoid method to solve the convex program in \ref{eq:cvxprog1}.
The Ellipsoid method gives the following,
\begin{theorem}[Theorem 13.1 in \cite{vishnoi_algorithms_for_convex_opt2021}]
\label{thm:vishnoi}
For a convex program of the form, $min_{x\in K} f(x)$, where $f$ is a convex function and $K\subseteq  \R^m$ is a convex set,
there is an algorithm that, given a first-order oracle for a convex function $f : \R^m \rightarrow \R$ with running time $T_f$, a separation oracle for a convex set $K$ with running time $T_{\phi}$, numbers $r > 0$ and $R > 0$ such that $K \subseteq B(0,R)$ and $K$ contains a Euclidean ball of radius $r$, bounds $l_0$ and $u_0$ such that $\forall x\in K,l_0 \leq  f(x) \leq u_0$, and an $\varepsilon >0$,
outputs a point $\widehat{x} \in K$ such that
$
f(\widehat{x})\leq  f(x^*)+\varepsilon,
$
where $x^*$ is any minimizer of $f$ over $K$. 
The running time of the algorithm is 
\[
\cO\paren{\paren{m^2+T_{\phi}+T_f}\cdot m^2\cdot \log^2\paren{\frac{R}{r} \cdot \frac{u_0 - l_0}{\varepsilon}}}.
\]
\end{theorem}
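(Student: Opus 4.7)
The plan is to prove the ellipsoid method theorem in two layers: an outer binary search on the objective value and an inner feasibility routine that is the classical ellipsoid method driven by the two oracles. For any guess $\beta$ of the optimum $f(x^*)$, define $K_\beta := \{x \in K : f(x) \le \beta\}$, which is convex since $K$ is convex and $f$ is convex. Then finding $\widehat{x}$ with $f(\widehat{x})\le f(x^*)+\varepsilon$ reduces to locating, for some $\beta$ within $\varepsilon/2$ of $f(x^*)$, a point in $K_\beta$ or else certifying its emptiness. I would run binary search over $\beta \in [l_0,u_0]$ to precision $\varepsilon/2$, which requires $\cO(\log((u_0-l_0)/\varepsilon))$ feasibility queries, and return the best feasible iterate produced.

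The feasibility subroutine for $K_\beta$ is the classical ellipsoid: initialize $E_0 = B(0,R) \supseteq K \supseteq K_\beta$. At iteration $t$, with $c_t$ the center of $E_t$, first query the separation oracle for $K$ at $c_t$; if $c_t\notin K$, it returns a hyperplane separating $c_t$ from $K$, which is valid for $K_\beta$ as well. Otherwise evaluate the first-order oracle to obtain $f(c_t)$ and a subgradient $g\in\partial f(c_t)$; if $f(c_t)\le \beta$ we output $c_t$ as feasible, and if not, convexity gives $f(x)\ge f(c_t)+g^\top(x-c_t)$, so the halfspace $\{x: g^\top(x-c_t)\le 0\}$ contains $K_\beta$ and serves as our cut. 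Update $E_{t+1}$ to the minimum-volume ellipsoid containing the retained half of $E_t$; by the classical Grötschel--Lovász--Schrijver analysis, $\text{vol}(E_{t+1}) \le e^{-1/(2(m+1))}\,\text{vol}(E_t)$, and each ellipsoid update takes $\cO(m^2)$ arithmetic operations.

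The key quantitative content is an inner-ball lemma: the sublevel set $K_{f(x^*)+\varepsilon/2}$ contains a Euclidean ball of radius at least $\rho := \Omega\!\paren{r\varepsilon/(u_0-l_0)}$. I would prove this by letting $c_K$ denote the center of the inscribed ball $B(c_K,r)\subseteq K$, setting $\delta := \varepsilon/(4(u_0-l_0))$, and defining $x_\delta := (1-\delta)x^* + \delta c_K$. Convexity of $K$ implies $B(x_\delta,\delta r)\subseteq K$, and convexity of $f$ implies $f(x_\delta)\le (1-\delta)f(x^*)+\delta f(c_K)\le f(x^*)+\delta(u_0-l_0) = f(x^*)+\varepsilon/4$. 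To promote this single near-optimal point to a ball of near-optimal points, I would invoke the standard Lipschitz-from-boundedness fact: a convex function bounded in $[l_0,u_0]$ on an inscribed ball of radius $r$ is Lipschitz with constant $\cO((u_0-l_0)/r)$ on a smaller concentric region; shrinking $\delta$ by a further constant factor then ensures that every point in a ball of radius $\rho$ around $x_\delta$ stays within $\varepsilon/2$ of $f(x^*)$. Once $\text{vol}(E_t)<\text{vol}(B(0,\rho))$, the subroutine must either have reported feasibility or certified $K_\beta=\emptyset$ for the current $\beta$.

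Combining the pieces, each feasibility call terminates in $N = \cO(m^2 \log(R/\rho)) = \cO\!\paren{m^2\log\!\paren{\tfrac{R}{r}\cdot\tfrac{u_0-l_0}{\varepsilon}}}$ iterations, each iteration costs $\cO(T_\phi+T_f+m^2)$ for one oracle call plus the ellipsoid update, and the outer binary search invokes the subroutine $\cO(\log((u_0-l_0)/\varepsilon))$ times. Multiplying these yields the claimed bound $\cO\!\paren{(m^2+T_\phi+T_f)\cdot m^2\cdot \log^2\!\paren{\tfrac{R}{r}\cdot\tfrac{u_0-l_0}{\varepsilon}}}$. The main obstacle I expect is the inner-ball lemma for the $\varepsilon$-sublevel set: a direct convexity argument only yields a single near-optimal point $x_\delta$, whereas the volume-based termination of the ellipsoid method needs a near-optimal region of positive volume, which requires carefully combining the segment construction with the quantitative Lipschitz bound inherited from boundedness on the inscribed ball of $K$.
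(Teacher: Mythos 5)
The paper does not prove this statement: it is imported verbatim as Theorem~13.1 of Vishnoi's textbook and used as a black box, so there is no in-paper proof to compare against. Judged on its own terms, your argument is the standard and correct one, and it is in fact the same two-layer scheme (outer binary search on $\beta$, inner ellipsoid feasibility test on the sublevel set $K_\beta=\set{x\in K : f(x)\le\beta}$) that the paper itself describes when it explains how it applies the theorem to its convex programs. The cut selection (separation oracle when $c_t\notin K$, subgradient halfspace $\set{x : g^\top(x-c_t)\le 0}$ when $c_t\in K$ but $f(c_t)>\beta$), the volume-decrease bound, and the accounting that multiplies $\cO\paren{m^2\log(R/\rho)}$ iterations by $\cO\paren{m^2+T_\phi+T_f}$ per iteration and by $\cO\paren{\log((u_0-l_0)/\varepsilon)}$ binary-search rounds to obtain the stated $\log^2$ factor are all sound.

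One remark: the step you single out as the main obstacle, promoting the single near-optimal point $x_\delta$ to a ball of near-optimal points, does not require the Lipschitz-from-boundedness detour. Every point $y$ of the ball $B(x_\delta,\delta r)$ can be written as $y=(1-\delta)x^*+\delta w$ with $w=c_K+(y-x_\delta)/\delta\in B(c_K,r)\subseteq K$, so the same convexity inequality $f(y)\le(1-\delta)f(x^*)+\delta f(w)\le f(x^*)+\delta(u_0-l_0)$ holds uniformly over the whole ball, giving $B(x_\delta,\delta r)\subseteq K_{f(x^*)+\varepsilon/4}$ directly with $\delta=\varepsilon/(4(u_0-l_0))$. (Your Lipschitz route also works but degrades the inner radius to order $r\varepsilon^2/(u_0-l_0)^2$, which is harmless inside the logarithm.) The remaining loose ends are minor and worth a sentence each: the edge case of a zero subgradient at an infeasible center certifies $K_\beta=\emptyset$; and the binary search needs one initial feasibility call at $\beta=u_0$ to seed a feasible iterate, which succeeds because $K_{u_0}=K$ contains $B(c_K,r)$.
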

\paragraph{Using the Ellipsoid method in \Cref{thm:vishnoi}.} 
The convex program in (\ref{eq:cvxprog1}) has a linear objective function.
Let 
\[C := (c_1, c_2, \ldots, c_k)~~\text{and}~~f_j(C) := \frac{1}{|S_j|}\sum\limits_{i \in [k]} \sum_{x \in S_{i j}} \|x-c_i\|^z.
\]

The algorithm does a binary search for the optimal value $\beta$ that is between $l_0$ and $u_0$.
For a fixed value of $\beta$, 
$
\widehat{K} := \set{C | f_j(C) \le \beta, \forall j \in [\ell]},
$
is also a convex set since each set $C$ that satisfies $f_j(C) \le \beta$ is a convex set and intersection of convex sets is also a convex set (see Section 2.3.1 in \cite{boyd2004convex}).
In each step, we have a separation oracle for the convex set $\widehat{K}$ and hence, can use the ellipsoid method to check for the feasibility of $\widehat{K}$.
At the end of the binary search, we have a solution with value at most $\beta^*+\varepsilon$. 
We can then query the separation oracle with this value to get a point in $\widehat{K}$ with value at most $\beta^*+\varepsilon$.

\begin{lemma}
\label{lem:seperation_oracle}
    There is a separation oracle and a first-order oracle for the convex program in (\ref{eq:cvxprog1}) for a fixed $\beta$, and each take time $\cO\paren{\ell kL^2(d+z)|S|}$.
\end{lemma}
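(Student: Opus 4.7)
The plan is to build both oracles out of per-group evaluators of the convex functions $f_j(C) := \frac{1}{|S_j|}\sum_{i \in [k]}\sum_{x \in S_{ij}} \|x - c_i\|^z$, exploiting convexity of each $f_j$ in $C = (c_1,\ldots,c_k)$ (for $z \ge 1$, $c \mapsto \|x-c\|^z$ is convex, and nonnegative sums preserve convexity). First I would record the closed-form partial gradient
\[
\nabla_{c_i} f_j(C) \;=\; \frac{z}{|S_j|}\sum_{x \in S_{ij}} \|c_i - x\|^{z-2}(c_i - x),
\]
which is the only calculus needed.

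For the separation oracle for $\widehat{K}$, on query $C$ I evaluate $f_j(C)$ for every $j \in [\ell]$; if $\max_j f_j(C) \le \beta$, report $C \in \widehat{K}$; otherwise pick any $j^*$ with $f_{j^*}(C) > \beta$ and return the halfspace $\{C' : \nabla f_{j^*}(C)^\top (C' - C) \le 0\}$. Validity follows in one line from convexity: for any $C' \in \widehat{K}$,
\[
\nabla f_{j^*}(C)^\top (C' - C) \;\le\; f_{j^*}(C') - f_{j^*}(C) \;\le\; \beta - f_{j^*}(C) \;<\; 0,
\]
so $\widehat{K}$ lies strictly on one side of the hyperplane while $C$ sits on it. The first-order oracle uses exactly the same machinery: evaluate all $f_j(C)$, pick an index $j^*$ achieving the maximum, and return $(f_{j^*}(C),\,\nabla f_{j^*}(C))$, which is a valid subgradient of $C \mapsto \max_j f_j(C)$.

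The remaining, and only mildly delicate, step is the bit-complexity count. Since every point $x \in S$ and every iterate $c_i$ has $L$-bit rational entries, $\|x - c_i\|^2$ can be computed exactly in $O(dL^2)$ bit operations and then raised to the $(z/2)$-th power in $O(zL^2)$ more via repeated squaring (with truncation to the ambient Ellipsoid precision when $z \notin \Z$). Each evaluation of $f_j$ or of $\nabla f_j$ thus costs $O((d+z)L^2) \cdot \sum_i |S_{ij}|$ bit operations; bounding the inner sum crudely by $|S|$ and multiplying by $\ell k$ ordered (group, cluster) pairs yields the stated $O(\ell k L^2(d+z)|S|)$ bound for both oracles. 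I do not foresee any conceptual obstacle beyond keeping this accounting honest.
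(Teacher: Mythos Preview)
Your proposal is correct and follows essentially the same route as the paper: define the per-group convex functions $f_j$, use the first-order inequality $f_j(C') \ge f_j(C) + \langle \nabla f_j(C), C'-C\rangle$ to exhibit $\nabla f_{j^*}(C)$ as a separating hyperplane at an infeasible point, and count bit operations term by term. The only item the paper adds that you omit is the degenerate case $\nabla f_{j^*}(C)=0$: then your ``halfspace'' is all of $\R^{kd}$ and separates nothing, but since $C$ is then a global minimizer of $f_{j^*}$ with $f_{j^*}(C)>\beta$, the set $\widehat{K}$ is empty and the binary search can move on---a one-line patch.
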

\begin{proof}
Let $\beta$ be the value in the current iteration of the binary search.
For $j \in [\ell]$, let $F_j(C) := f_j(C) - \beta$.
Then $F_j(C)$ is also a convex function.
Therefore, the following always holds,
\begin{equation}\label{eqn:cvx_func_property}
    F_j(C) \ge F_j(C') + \langle \nabla F_j(C'), C-C' \rangle, \forall C, C' \in \widehat{K}.
\end{equation}
Then for any point $C \in \widehat{K}$, the first order oracle outputs $F_j(C)$ and $\paren{\nabla_1 F_j(c_1), \nabla_2 F_j(c_1), \ldots, \nabla_k F_j(c_k)}$ in time $\cO\paren{kL^2(d+z)|S_{j}|}$ where $L$ is the bit complexity of the problem and $\nabla_i F_j(c_i)$ is the derivative of the function $F_j$ with respect to the point $c_i$, for any $i \in [k]$.

For an input $C' \in \widehat{K}$, the separation oracle should output one of the following,
\begin{itemize}
    \item If $C' \in \widehat{K}$, output YES.
    \item If $C' \notin \widehat{K}$, output NO and a hyperplane defined by $h (\ne 0) \in \widehat{K}$ such that $\forall C \in \widehat{K}$, $\langle h,C \rangle < \langle h,C' \rangle$.
\end{itemize}
The first case is easy to check in polynomial time. 
So let us assume $C' \notin \widehat{K}$.
Then $\exists j \in [\ell]$ such that $F_j(C') > 0$. 
This implies that $F_j(C') > F_j(C), \forall C\in \widehat{K}$. 
Substituting this in equation \ref{eqn:cvx_func_property} gives us, $\langle \nabla F_j(C'), C-C' \rangle < 0$. Therefore, $\langle \nabla F_j(C'),C \rangle < \langle \nabla F_j(C'),C' \rangle , \forall C \in \widehat{K}$. 
Since we have access to the first order oracle, the separation oracle also runs efficiently.

Note that if $\nabla F_j(C') = 0$, $F_j(C') \leq F_j(C)$, $\forall C \in \widehat{K}$ since $F_j$ is convex. In which case $F_j(C') > 0 \implies F_j(C) > 0$ for all $C \in \widehat{K}$. If this holds $\widehat{K}$ is empty.
Hence the binary search proceeds to the next step.

Therefore, the separation oracle also has the same running time as the first order oracle.
Since we need to check $\ell$ of them, the lemma follows.
\end{proof}

\begin{proof}[Proof of \Cref{thm:kz}]

The obvious lower bound estimate $l_0$ is $0$. For the upper bound estimate $u_0$, we can consider the value $|S|\cdot \text{max}_{x,y}\|x-y\|^z$, which is $2^{\mathcal{O}(Lz+d+\log|S|)}$. Consequently, the radius of the outer ball $R$ will also be $2^{\mathcal{O}\paren{Lz+d+\log|S|}}$ since we can always take $R$ to be $2u_0$. 
Since $\widehat{K}$ is full dimensional, $r$ will be $2^{-\cO\paren{L}}$.
Therefore, the result follows from \Cref{thm:framework}, \Cref{thm:vishnoi}, and \Cref{lem:seperation_oracle}.
\end{proof}

\paragraph{Socially Fair $(q,k,z)$ Linear Subspace Clustering.}
Let $\set{V_i}_{i \in [k]}$ denote $k$ socially fair linear subspaces, each of rank at most $q$. 
Let $z_{i,1}, z_{i,2}, \cdots, z_{i,d-q}$ denote the orthonormal basis of the orthogonal complement of $V_i$ and let $Z_i \in \mathbb{R}^{d\times d-q}$ denote the matrix with the $h^{\text{th}}$ column as $z_{i,h}$. We know that the distance of a point $x$ to the $i$th subspace is its projection on the orthogonal subspace. Thus, 
$
d(x,V_i) = \|x^TZ_i\|_2.
$
Here let $S$ be an $\epsilon$-coreset for $X$ with respect to the socially fair $(q,k,z)$ linear subspace clustering objective.
Then, the oracle can be expressed as follows,
\begin{align}\label{cvxprog:linprojclus}
    &\min_{\beta\in \R, Z_1,\cdots,Z_k \in \mathbb{R}^{d\times d-q}} \quad \beta,
\end{align}
\begin{align*}
    \text{s.t.,}\quad & \frac{1}{|S_j|}\sum\limits_{i \in [k]} \sum_{x \in S_{i j}} \|x^TZ_i\|_2^z \le \beta, ~~\forall j \in [\ell],\\
    &\|Z_i^{(h)}\|_2 \ge 1, ~~ \forall i \in [k],\forall h\in [d-q],\\
    &\langle Z_i^{(h_1)},Z_i^{(h_2)} \rangle = 0,~~ \forall h_1 \ne h_2, i \in [k].
\end{align*}

This problem can be approximated by utilizing techniques from ~\cite{DTV2011}. More precisely, we have the following theorem,

\begin{theorem}
\label{thm:qkz}
    Given a set of $n$ points in $\R^d$, with bit complexity $L$, that are partitioned into $\ell$ groups, numbers $k,q \in \Z_{\ge 0}$, $z \ge 1$, and an algorithm with running time $T_S$ to compute a strong $\varepsilon$-coreset $S$ for socially fair $(q,k,z)$ linear subspace clustering, there exists a  $\sqrt{2}\ell^{\frac{1}{z}}\gamma_z(1+\varepsilon)$ approximation algorithm to socially fair $(q,k,z)$ linear subspace clustering that runs in time\\ $$\widetilde{\cO}\paren{k^{|S|}\cdot\paren{k^2d^4+\ell kd^2zL|S|+\ell d^3L}\cdot k^2d^4\cdot \paren{Lz+d+\log|S|}^2+T_S}.$$
\end{theorem}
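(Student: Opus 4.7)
Following the pattern of the proof of \Cref{thm:kz}, the plan is to apply the framework of \Cref{thm:framework} with an $\alpha$-approximate \oracle{} for the linear-subspace problem where $\alpha = \sqrt{2}\,\ell^{1/z}\gamma_z$; the $(1+\varepsilon)$ blow-up and the $k^{|S|}$ factor over partitions of the coreset then come for free from the framework, together with the additive $T_S$ term for constructing the coreset once at the outset. What remains is to implement the oracle, with the claimed approximation factor and running time, by (i) convexifying the program~(\ref{cvxprog:linprojclus}), (ii) solving the relaxation with the Ellipsoid method of \Cref{thm:vishnoi}, and (iii) rounding the fractional solution back to honest subspaces.

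For the relaxation I would substitute $Y_i = Z_i Z_i^T$, so that $\|x^T Z_i\|_2^z = (x^T Y_i x)^{z/2}$, and replace the non-convex rank/orthogonality constraints by the convex semidefinite constraints $0 \preceq Y_i \preceq I$ and $\Tr(Y_i) = d - q$. For $z \ge 2$ each $(x^T Y_i x)^{z/2}$ is a convex function of $Y_i$, so the per-group cost $G_j(Y) := \frac{1}{|S_j|}\sum_{i \in [k]} \sum_{x \in S_{ij}} (x^T Y_i x)^{z/2}$ is convex, and the program $\min \beta$ subject to $G_j(Y) \le \beta$ for all $j \in [\ell]$ is a convex program in $m = O(k d^2)$ real variables. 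Paralleling \Cref{lem:seperation_oracle}, I would build a first-order and separation oracle by computing the gradients of each $G_j$ in time $O(\ell k d^2 z L |S|)$ and checking the PSD constraints through an eigenvalue computation in time $O(\ell d^3 L)$, matching the oracle cost that appears inside the running-time bound in the theorem.

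The rounding step is the heart of the argument. I would invoke the Gaussian rounding of~\cite{DTV2011}: given the optimal SDP matrices $Y_i^\star$, there is a procedure producing rank-$(d-q)$ orthogonal-projection matrices $\widehat{Z}_i\widehat{Z}_i^T$ such that for every fixed $x \in \R^d$,
\[
\E\!\left[\,\|x^T \widehat{Z}_i\|_2^z\,\right] \le \gamma_z^z\,(x^T Y_i^\star x)^{z/2},
\]
where $\gamma_z \approx \sqrt{z/e}(1+o(1))$. By linearity of expectation the expected $z$th-power cost on group $j$ is at most $\gamma_z^z$ times the relaxation value per group. To convert this per-group bound into a bound on $\max_j \cost(\widehat{V},S_j)$, I would use $\max_j c_j \le \bigl(\sum_j c_j^z\bigr)^{1/z} \le \ell^{1/z}\max_j c_j$, sum the per-group expectations, and apply Markov's inequality to the single nonnegative scalar $\sum_j \cost(\widehat{V},S_j)^z$: with probability at least $1/2$ this quantity is within a factor of $2$ of its expectation, so after $O(1)$ independent repetitions one deterministically obtains subspaces whose fair cost is at most $(2\ell)^{1/z}\gamma_z$ times the relaxation value. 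Since $2^{1/z} \le \sqrt{2}$ for $z \ge 2$ and the relaxation lower-bounds $\fcost(V^\star,S)$, this yields the claimed oracle approximation factor $\sqrt{2}\,\ell^{1/z}\gamma_z$.

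The main obstacle I anticipate is the rounding analysis, since \cite{DTV2011} is stated for a single aggregate quadratic cost while our objective is a maximum of $\ell$ group-wise sums across $k$ coupled subspaces; threading the DTV guarantee through the outer maximum cleanly requires the max-to-sum conversion above together with the Markov-type boost, and one must also argue that these can be applied while independently rounding each $Y_i^\star$ without breaking the per-point guarantee. Once the oracle is in place, the overall running time follows by plugging $m = O(k d^2)$, $T_\phi + T_f = O(\ell k d^2 z L |S| + \ell d^3 L)$, and bit-length bounds on $R$, $r$, and $u_0 - l_0$ entirely analogous to those used in the proof of \Cref{thm:kz} (the upper bound $u_0$ is $|S| \cdot \max_{x}\|x\|^z = 2^{O(Lz + d + \log|S|)}$ and $r = 2^{-O(L)}$) into \Cref{thm:vishnoi}, then multiplying by the $k^{|S|}$ outer enumeration and adding the additive $T_S$ from \Cref{thm:framework}.
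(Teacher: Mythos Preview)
Your proposal is correct and matches the paper's approach: the same SDP relaxation (the paper writes it with $|x^T\widehat{Z}_i x|^{z}$ and $\widehat{Z}_i$ playing the role of your $Y_i$), the same separation-oracle construction and ellipsoid parameters, and the same per-subspace rounding from~\cite{DTV2011}. The only visible difference is in how the $\sqrt{2}\,\ell^{1/z}$ loss is derived: the paper bounds $\E[\fcost(V,S)]$ directly via Jensen's inequality (pulling the expectation inside the $1/z$-th root), then replaces $\max_j$ by $\sum_j$ and back to pick up the $\ell^{1/z}$, quoting the per-group \cite{DTV2011} guarantee with the $\sqrt{2}\gamma_z$ constant already included, whereas you sum the group-wise $z$th-power costs, apply Markov to that sum, and extract the $\sqrt{2}$ from $2^{1/z}\le\sqrt{2}$ for $z\ge 2$.
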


In a recent work \cite{TWZ+22} that studies the problem of constructing a strong coreset for $(q,k,z)$ projective clustering when the points come from a polynomial grid, the authors provide a constant-factor approximation coreset of size
$\cO\paren{\paren{8q^3\log(d\Delta)}^{O(qk)}}$ for $z = \infty$, with running time $\cO\paren{nq^4\paren{\log\Delta}^{q^2k}}$, and an $\varepsilon$-coreset of size $\cO\paren{\paren{8q^3\log(d\Delta)}^{O(qk)}
\log n}$ for $z=2$ with running time $\cO\paren{n^2q^4\paren{\log\Delta}^{q^2k}}$, where $\Delta$ is the ratio of the largest and the smallest coordinate magnitudes.
Substituting this we get that our algorithm runs in time 
$\cO\paren{n^2q^4\paren{\log\Delta}^{q^2k}} + n\cdot 2^{\cO((q\log(d\Delta)^{\cO(qk)}\log L)}$ for socially fair $(q,k,2)$ linear subspace clustering.

\begin{proof}[Proof of \Cref{thm:qkz}]
We consider the convex relaxation of \ref{cvxprog:linprojclus} similar to~\cite{DTV2011},

\begin{align*}
    &\min_{\beta\in \R, \widehat{Z}_1,\cdots,\widehat{Z}_k \in \mathbb{R}^{d\times d}} \quad \beta,\\
    \text{s.t.,}\quad &\sum\limits_{i \in [k]} \sum_{x \in S_{i j}} |x^T\widehat{Z}_ix|^{z} \le \beta, ~~\forall j \in [\ell],\\
    &\Tr\paren{\widehat{Z}_i} \ge d-q, ~~ \forall i \in [k],\\
    &I\succcurlyeq \widehat{Z}_i \succcurlyeq 0, ~~ \forall i \in [k].
\end{align*}
In this case, let 
\[
\widehat{Z} := \paren{\widehat{Z}_1, \widehat{Z}_2, \ldots, \widehat{Z}_k}~~\text{and}~~F_j\paren{\widehat{Z}} := \sum\limits_{i \in [k]} \sum_{x \in S_{i j}} |x^T\widehat{Z}_ix|^{z} - \beta, \forall j \in [\ell].
\]
Then the separation oracle for the first set of constraints $F_j\paren{\widehat{Z}} \le 0, \forall j \in [\ell]$ can be constructed in exactly the same way as in \Cref{lem:seperation_oracle}.

Therefore, we now show the separation oracle for the rest of the constraints.
Note that the trace constraint is a linear constraint.
Therefore, we can use $h\paren{\widehat{Z}_i} = d-q-\Tr(\widehat{Z}_i)$ as a separating hyperplane.
For the constraint $\widehat{Z}_i' \succcurlyeq 0$, the following cases arise,
\paragraph{Case A:}If $\widehat{Z}_i'$ is symmetric, then due to the spectral decomposition, we have all the eigenvectors and eigenvalues of the matrix $\widehat{Z}_i'$.
    If all the eigenvalues are positive, then it satisfies the constraint, otherwise output the hyperplane $-vv^T$ where $v$ is the eigenvector corresponding to one of the negative eigenvalues (say $\lambda$).
    Then,
    \begin{align*}
    \langle -vv^T, \widehat{Z}_i\rangle = -v^T\widehat{Z}_iv \leq 0< -\lambda v^Tv = v^T\widehat{Z}_i'v = \langle -vv^T, \widehat{Z}_i'\rangle, \quad \forall \widehat{Z}_i, \widehat{Z}_i' \in \Q^{d\times d}.
    \end{align*}
\paragraph{Case B:}If $\widehat{Z}_i'$ is not symmetric, then it does not satisfy the constraint.
Let $r,s \in [d]$ be such that $\widehat{Z}_i'(r,s) > \widehat{Z}_i'(s,r)$ w.l.o.g.
Then output a matrix $M$ where $M(r,s) = 1$, $M(s,r) = -1$, and all other entries are $0$.
Then for any symmetric matrix $\widehat{Z}_i$ we have that 
\[\langle M, \widehat{Z}_i\rangle = 0 < \widehat{Z}_i'(r,s) - \widehat{Z}_i'(s,r) = \langle M, \widehat{Z}_i'\rangle.\]
Since the set of positive semi-definite matrices is a subset of the set of symmetric matrices, and that the set of symmetric matrices is also a convex set, this forms a valid separation oracle. This separation oracle runs in time $\cO\paren{d^{3}L}$.

The constraint $I \succcurlyeq \widehat{Z}_i'$ can be written as $I - \widehat{Z}_i' \succcurlyeq 0$.
Then checking positive semideifiniteness if $I-\widehat{Z}_i'$ is symmetric is the same as case A above.
For case B, that is when $I - \widehat{Z}_i'$ is not symmetric, let $r,s \in [d]$ be such that $(I-\widehat{Z}_i')(r,s) > (I-\widehat{Z}_i')(s,r)$ w.l.o.g.
Then output a matrix $M$ where $M(r,s) = -1$, $M(s,r) = 1$, and all other entries are $0$.
Then for any symmetric matrix $\widehat{Z}_i$ we have that 
\[\langle M, I-\widehat{Z}_i\rangle = 0 < \widehat{Z}_i'(r,s) - \widehat{Z}_i'(s,r) = \langle M, I-\widehat{Z}_i'\rangle.\]
This separation oracle also runs in time $\cO\paren{d^{3}L}$.

Even in this case the obvious lower bound is $l_0 = 0$ and since $I \succcurlyeq \widehat{Z}_i, \forall i \in [k]$, the upper bound is $u_0 = |S|\cdot \max_x |x^T I x| = |S|\cdot \max_x \|x\|^2$ which is $2^{\mathcal{O}(Lz+d+\log|S|)}$. Consequently, the radius of the outer ball $R$ will also be $2^{\mathcal{O}\paren{Lz+d+\log|S|}}$ since we can always take $R$ to be $u_0$. 
Since $K$ is full dimensional, $r$ will be $2^{-\mathcal{O}\paren{L}}$.

After solving the convex program, we can apply the rounding algorithm of~\cite{DTV2011} independently for each of the $k$ linear subspaces, giving us $(V_1,\cdots,V_k)$ that have a $\sqrt{2}\gamma_z$ approximation factor for each group. Then,

\begin{align*}
    \E[\fcost(V,S)] & = \E[\max_{j\in [\ell]}\cost(V,S_j)]\\
     & = \E[(\max_{j\in [\ell]}\cost^z(V,S_j))^{1/z}]\\
     & \le \paren{\E[\max_{j\in [\ell]}\cost^z(V,S_j)]}^{1/z} & (\text{by Jensen's inequality})\\
     & \le \big(\E\big[\sum_{j\in [\ell]}\cost^z(V,S_j)\big]\big)^{1/z}\\
     & = \big(\sum_{j\in [\ell]}\E[\cost^z(V,S_j)]\big)^{1/z}&(\text{by linearity of expectation})\\
     & \le \big(\ell\cdot \max_{j\in [\ell]}\E[\cost^z(V,S_j)]\big)^{1/z}\\
     & = \ell^\frac{1}{z}\cdot\max_{j\in [\ell]}(\E[\cost^z(V,S_j)])^{1/z}\\
     & \le \ell^\frac{1}{z}\sqrt{2}\gamma_z\cdot\opt & (\text{from}~\cite{DTV2011}),
\end{align*}
thereby, giving us an overall approximation factor of $\sqrt{2}\gamma_z\ell^\frac{1}{z}$. 

Since, $T_f = T_{\phi} = \cO\paren{\ell kd^2zL|S|+\ell d^3L}$, the result follows from \Cref{thm:framework}.
\end{proof}

\section{A Practical Method for Socially Fair Clustering}
\label{sec:experiments}
In this section, we first describe a Lloyd-like heuristic implementation of our framework (see \Cref{alg:algo2}) that is more practical in terms of running time for performing experiments on real-world datasets. Utilizing \Cref{alg:algo2}, we perform an experimental evaluation on multiple benchmark datasets for center-based clustering ($k$-means and $k$-medians objectives) as well as linear subspace clustering\footnote{Implementation can be accessed here: \url{https://github.com/kishen19/Socially-Fair-Clustering}.}

\subsection{Socially Fair \texorpdfstring{\boldmath$(k,z)$}{(k,z)} Lloyd's Heuristic}
In Algorithm \ref{alg:algo2}, rather than iterating over all possible $k$-clusters of the representative set, we perform a Lloyd-like iterative update. This involves three key steps, 1) calculating the clustering (or partition) of the data based on the current set of centers, 2) computing a representative set of the data, and 3) finding the optimal set of $k$ centers on this representative set using \oracle.
Note that unlike described in \Cref{alg}, we construct a representative set at every iteration. This enables us to take advantage of the partitions created at every iteration by constructing a representative set for the $1$-means (and $1$-median) objective for every $X_{ij}$ and taking a union of these representative sets. In most of our experiments, we use a simple uniform random sampling followed by normalization to construct the representative sets, as described in \Cref{alg:algo2}.
We observe in our experimental results that the algorithm finds a good enough solution in very small number of iterations, at which point we stop the algorithm and return the current centers (subspaces).
We note that \cite{GSV2021} also propose a variant of Lloyd's heuristic called Fair-Lloyd where, starting with a uniform random set of clusters of the whole data, in each iteration it finds a set of centers that minimize the socially-fair $k$-means clustering cost for the current set of clusters, and re-assigns all the points to their nearest center.
For arbitrary number of groups, \cite{GSV2021} proposed to find the centers via a heuristic based on the multiplicative weights update method.
Different from this, we work with a representative sample of the data, and in each iteration, we find $\alpha$-approximate centers for the clusters in that iteration, where $\alpha$ is as given in \Cref{sec:cvxprog}.
We evaluate Algorithm \ref{alg:algo2}~with $k$-means and $k$-median objectives for center-based clustering, and with $(q,1,2)$ linear subspace clustering objective (also known as \emph{Linear Subspace Approximation}).
\begin{algorithm}
\caption{Socially Fair $(k,z)$ Lloyd's Heuristic}
\label{alg:algo2}
\begin{algorithmic}[1]
\Require The set of points $X$ and their group memberships, the clustering parameters $k$, $z$, and the sample size $M$.
\Ensure Cluster centers $C$.
\State Initialize the clusters $\cP(X,k)=\paren{P_1(X), P_2(X), \ldots, P_k(X)}$ uniformly at random.

\For {$T$ iterations}
\State $X_{ij} := P_i(X) \cap X_j, \forall i \in [k], j \in [\ell]$.
    
\State    Construct $S_{ij} $ by sampling $M$ points from $X_{ij}$ uniformly at random and set the weight of each point to be $|X_{ij}|/M$. (Note: If $M>|X_{ij}|$, then $S_{ij} = X_{ij}$ with weights as $1$).
    
\State    $S := \bigcup_{i \in [k]} \bigcup_{j \in [\ell]} S_{ij}$.\label{step:coreset}
    
 \State   $\widehat{\cP} := \paren{P_1(X) \cap S, P_2(X) \cap S, \ldots, P_k(X) \cap S}$.
    
 \State   $C =  \text{\oracle}\paren{\widehat{\cP}}$.
    
\State    Update the clusters  $\cP$ by assigning the points in $X$ to their nearest center in $C$.
\EndFor
\State \Return cluster centers $C$.
\end{algorithmic}
\end{algorithm}

\textbf{Datasets.} We experiment over a diverse set of real-world datasets which comprise of various sensitive features: 1) \textit{Credit}, 2) \textit{Adult Income}, 3) \textit{Bank}, 4) \textit{German Credit}, and 5) \textit{Skillcraft}.
All the datasets are obtained from the UCI repository \cite{Dua:2019}.
Table~\ref{tab:datasets} summarizes the size of the dataset, the total number of features in the dataset and the sensitive features based on which the items in the dataset are partitioned into disjoint groups.

\begin{table}[!ht]
    \centering
    \caption{Datasets}
    \label{tab:datasets}
    \setcellgapes{2.5pt}\makegapedcells
    \resizebox{1\textwidth}{!}{
    \begin{tabular}{|l|c|c|c|c|}
        \hline
        \textbf{Dataset} & \textbf{\#samples} & \textbf{\#attr.} & \makecell{\textbf{Sensitive}\\ \textbf{feature}} & \textbf{Groups}\\
        \hline
        Credit~\cite{YEH20092473} & $30000$ & $23$ & Education & Higher, Lower \\
        \hline
        \multirow{2}{*}{\makecell[l]{Adult Income~\cite{Dua:2019}}} & \multirow{2}{*}{$48842$} & \multirow{2}{*}{104} & Gender & Male, Female\\
        \cline{4-5}
        &  &  & Race & \makecell[c]{Amer-Indian-Eskim,  Asian-Pac-Islander, Black, White, Other}\\
        \hline
        Bank~\cite{Moro2014ADA} & $41188$ & $63$ & Age & $\le25$, $25$-$60$, $\ge 60$\\
        \hline
        \makecell[l]{German Credit~\cite{Dua:2019}} & $1000$ & $51$ & Age &$\le25$, $25$-$60$, $\ge 60$\\
        \hline
        Skillcraft~\cite{skillcraft} & $3340$ & $20$ & Age & $<21$, $\ge 21$ \\
        \hline
    \end{tabular}}
    
\end{table}

\textbf{Experimental setup.} 
We normalize the continuous attributes to have mean $0$ and variance $1$, and  encode the categorical attributes using one-hot encoding, similar to our baseline \cite{GSV2021}.
It is a common practice to reduce the dimension using PCA as a pre-processing step \cite{DX2004}, also used in the baseline for socially fair $k$-means, Fair-Lloyd \cite{GSV2021}.
In our experiments for socially fair $k$-means and $k$-medians, we use PCA to get the best $k$ dimensional approximation of the data. In all experiments, all the algorithms are initialized with the same set of centers.

\subsection{Socially Fair \texorpdfstring{\boldmath$k$}{k}-means}

\begin{figure*}[!ht]
	\centering
	\begin{subfigure}[b]{0.05\linewidth}
		\centering
		\includegraphics[scale=0.4]{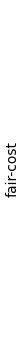} 
	\end{subfigure}
	\begin{subfigure}[b]{0.24\linewidth}
		\centering
		\includegraphics[scale=0.18]{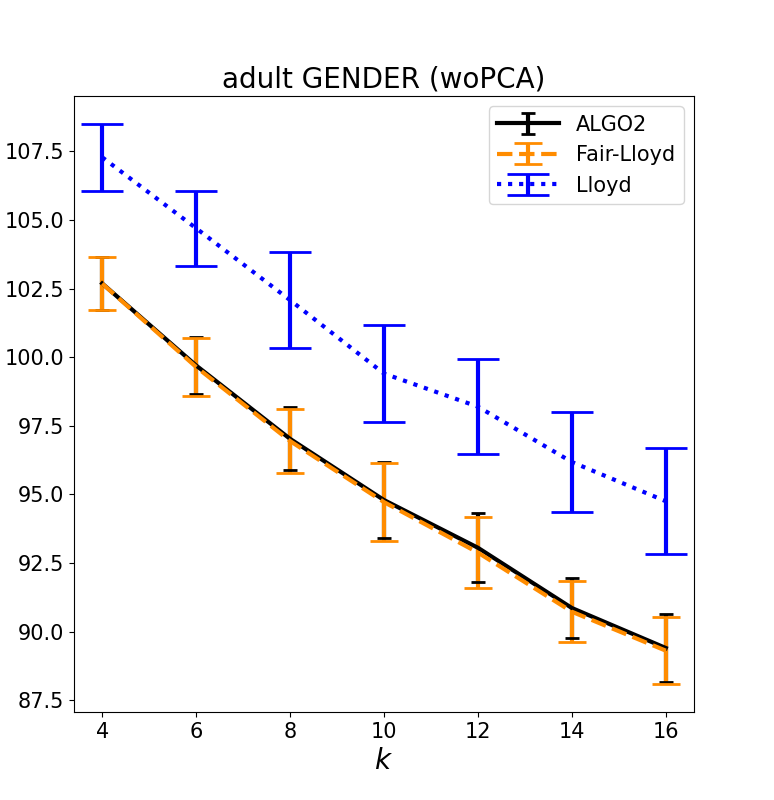} 
	\end{subfigure}
	\begin{subfigure}[b]{0.24\linewidth}
		\centering
		\includegraphics[scale=0.18]{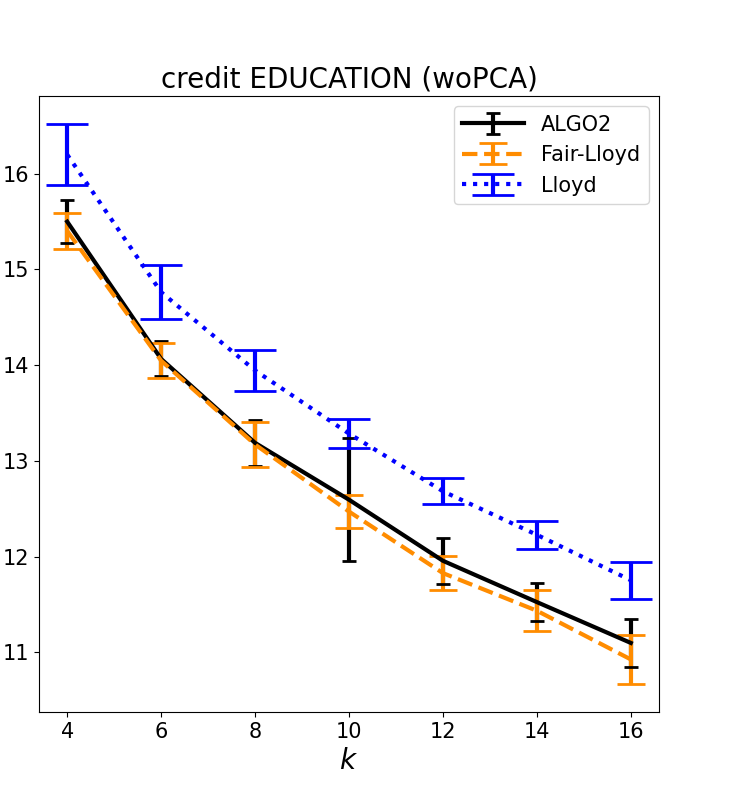} 
	\end{subfigure}
	\begin{subfigure}[b]{0.24\linewidth}
		\centering
		\includegraphics[scale=0.18]{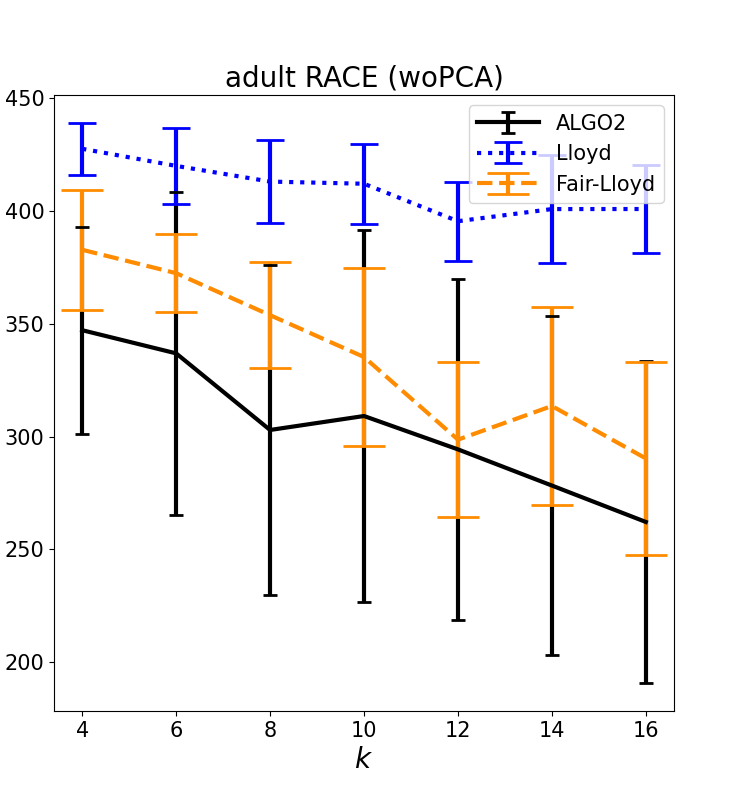} 
	\end{subfigure}
	\begin{subfigure}[b]{0.24\linewidth}
		\centering
		\includegraphics[scale=0.18]{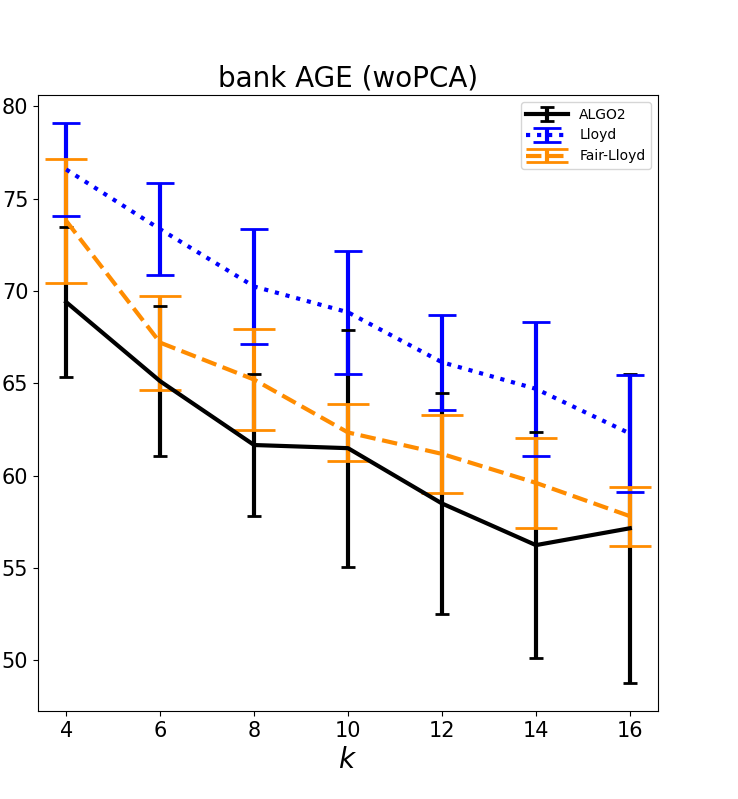} 
	\end{subfigure}
	
	\begin{subfigure}[b]{0.05\linewidth}
		\centering
		\includegraphics[scale=0.4]{results/faircost_cropped.png} 
	\end{subfigure}
    \begin{subfigure}[b]{0.24\linewidth}
		\centering
		\includegraphics[scale=0.18]{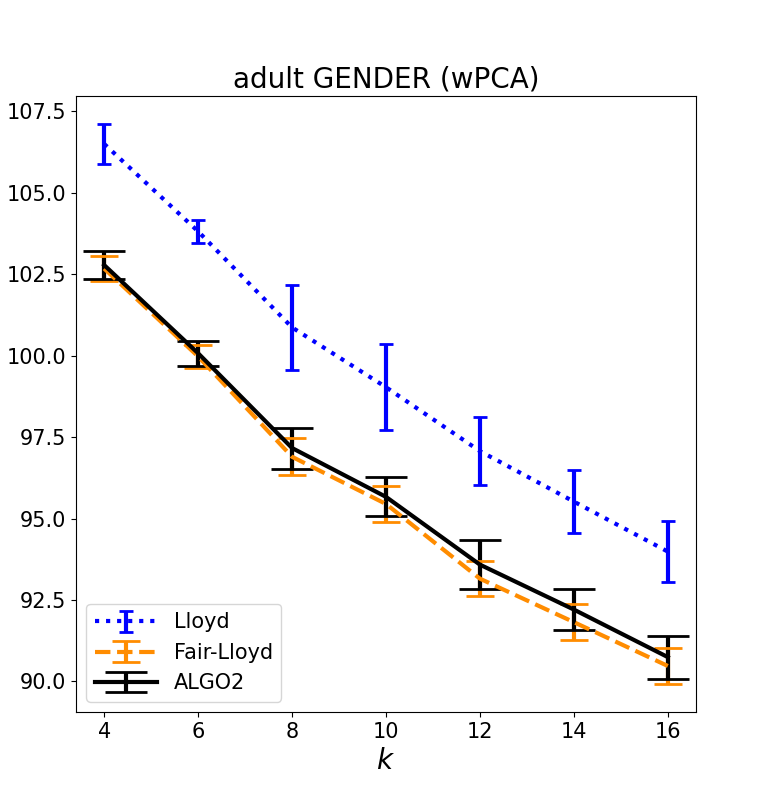}
	\end{subfigure}
	\begin{subfigure}[b]{0.24\linewidth}
		\centering
		\includegraphics[scale=0.18]{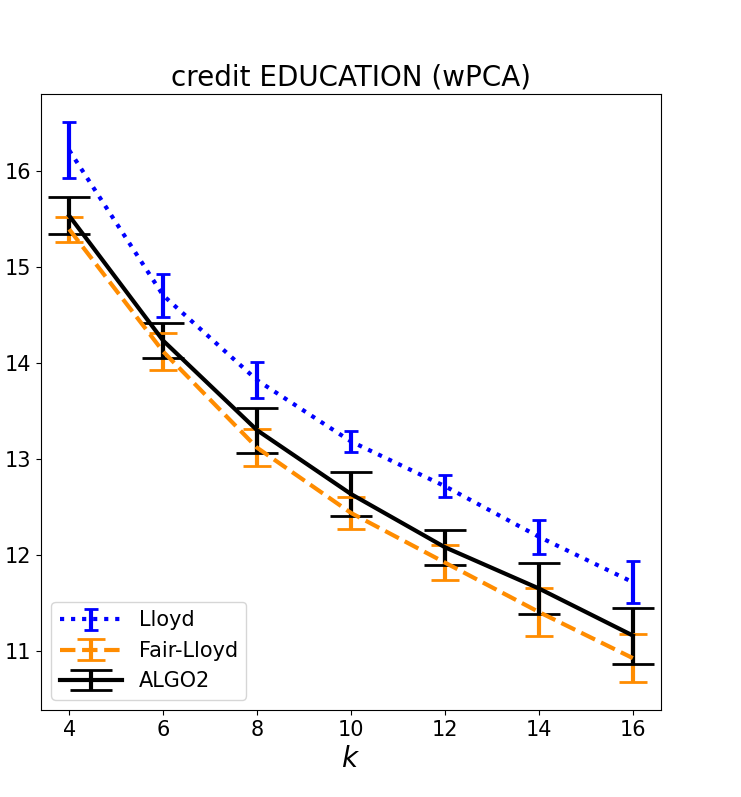} 
	\end{subfigure}
	\begin{subfigure}[b]{0.24\linewidth}
		\centering
		\includegraphics[scale=0.18]{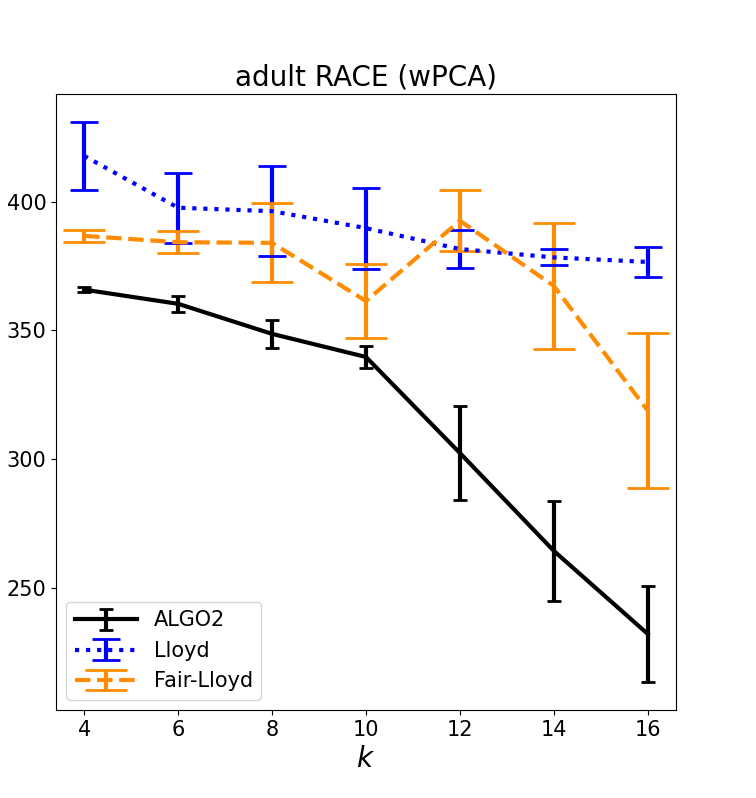} 
	\end{subfigure}
	\begin{subfigure}[b]{0.24\linewidth}
		\centering
		\includegraphics[scale=0.18]{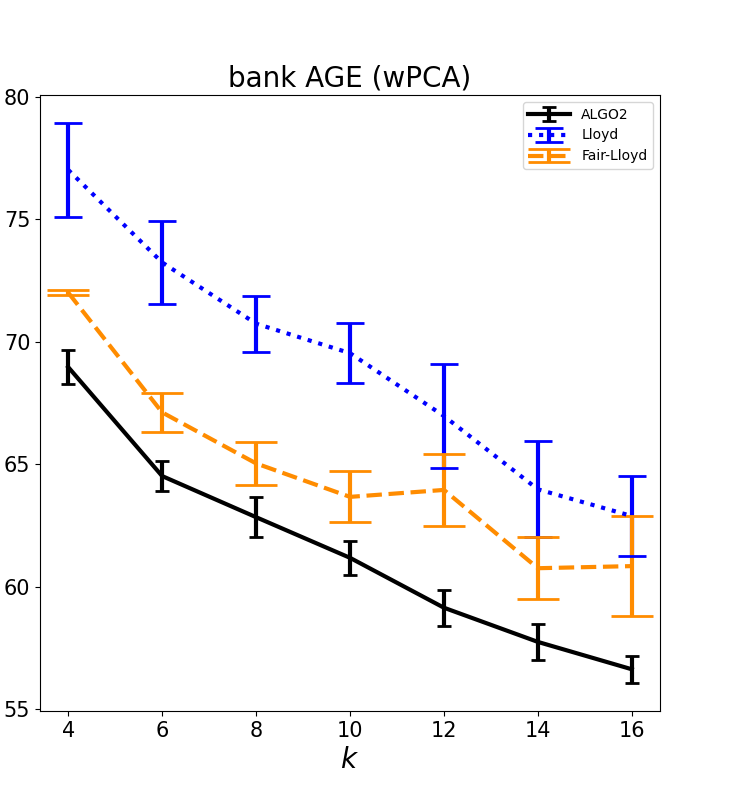} 
	\end{subfigure}

	\caption{All algorithms are run on $50$ different initializations; ALGO2 is run for $20$ iterations, and Fair-Lloyd and Lloyd are run for $100$ iterations. For ALGO2, coreset construction uses $5$ samples of size $M = 100$ for each $P_{ij}$. The plots show mean and standard deviation of the socially fair clustering cost (woPCA = without PCA, wPCA = with PCA, ALGO2 = \Cref{alg:algo2}).}
	\label{fig:kmeans_meanstddev}
\end{figure*}

Before proceeding to the experimental observations, consider the following.
\begin{lemma}[Lemma 2.1 in \cite{KANUNGO200489}]
\label{lem:kmeans}   
Let $X$ be a set of $n$ points in $\mathbb{R}^d$ belonging to $\ell$ groups $X_1,\cdots,X_\ell \subseteq X$. Given a partition $\cP(X,k)=\paren{P_1(X),\cdots,P_k(X)}$, let $X_{ij}$ denote $P_i(X) \cap X_j$. Given a set of centers $C=(c_1,\cdots,c_k)$, $\forall i \in [k], j\in [\ell]$,
    $$\sum_{x\in X_{ij}} \|x-c_i\|^2=\sum_{x \in X_{ij}}\|x-\mu_{ij}\|^2 + |X_{ij}|\cdot\|\mu_{ij}-c_i\|^2,$$
where $\mu_{ij}$ is the centroid of the set $X_{ij}$.
\end{lemma}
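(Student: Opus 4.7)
The plan is to prove this standard centroid decomposition identity by the familiar add-and-subtract trick, exploiting the defining property of the centroid $\mu_{ij} := \frac{1}{|X_{ij}|}\sum_{x \in X_{ij}} x$, namely that $\sum_{x \in X_{ij}}(x - \mu_{ij}) = 0$.

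First, for each $x \in X_{ij}$, I would write $x - c_i = (x - \mu_{ij}) + (\mu_{ij} - c_i)$ and expand the squared Euclidean norm via the polarization identity:
\[
\|x - c_i\|^2 = \|x - \mu_{ij}\|^2 + 2\langle x - \mu_{ij},\ \mu_{ij} - c_i\rangle + \|\mu_{ij} - c_i\|^2.
\]
Next, I would sum this identity over $x \in X_{ij}$. The first term gives $\sum_{x \in X_{ij}}\|x - \mu_{ij}\|^2$ directly, and the third term gives $|X_{ij}|\cdot\|\mu_{ij} - c_i\|^2$ since it does not depend on $x$. The cross term is
\[
2\Big\langle \sum_{x \in X_{ij}}(x - \mu_{ij}),\ \mu_{ij} - c_i\Big\rangle,
\]
and this vanishes because $\sum_{x \in X_{ij}}(x - \mu_{ij}) = \sum_{x \in X_{ij}} x - |X_{ij}|\mu_{ij} = 0$ by the definition of $\mu_{ij}$ as the arithmetic mean of $X_{ij}$.

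There is really no obstacle here; the proof is a one-line expansion together with the centroid condition. The only mildly subtle point is noticing that the cross term is linear in $x$, which is why the centroid's defining property is exactly what is needed to kill it; if one were to use a different "center" (e.g., the geometric median), the corresponding cross term would not vanish, and this identity would fail. Putting the three contributions together yields the claimed equation, completing the proof.
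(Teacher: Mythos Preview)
Your proof is correct and is exactly the standard argument for this centroid decomposition identity. The paper does not actually give its own proof of this lemma; it is stated as Lemma~2.1 of \cite{KANUNGO200489} and used without further justification, so there is nothing to compare against beyond noting that your add-and-subtract expansion is the canonical way this result is established.
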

Due to \Cref{lem:kmeans} we get a very small ``exact'' representation (or, a $0$-coreset) for $k$-means clustering to use in Algorithm \ref{alg:algo2}, as follows.
We add to the coreset $S$, the mean $\mu_{ij}$ of the points in $X_{ij}$ with weight $|X_{ij}|$, for all $i \in [k]$ and $j \in [\ell]$. 
This gives us a coreset of size $k\ell$. 
However, this would now be a coreset for the modified objective function (with an additive constant) as given in \Cref{lem:kmeans}. Thus, we get,
$$ \cost(C,X_j) = \left(\frac{1}{|X_j|}\left(\sum_{i \in [k]}|X_{ij}|\cdot\|\mu_{ij}-c_i\|^2 + a_j \right)  \right)^{1/2},$$
where, $a_j = \sum\limits_{i \in k}\sum\limits_{x \in X_{ij}} \|x-\mu_{ij}\|^2$. Note that $a_j$ is constant, $\forall j\in [\ell]$.

For the experimental analysis of socially fair $k$-means clustering, we run Algorithm \ref{alg:algo2} (with the coreset defined above) along with the baselines on the Adult Income, Credit and Bank datasets. We perform experiments both with and without the \emph{PCA} pre-processing. We run the algorithms for $50$ different initializations. Algorithm~\ref{alg:algo2} is run for $20$ iterations while the baselines are run for $100$ iterations each.

\textbf{Baselines.} We compare our results with $(i)$ Fair-Lloyd \cite{GSV2021} for socially fair $k$ means clustering.
This is a Lloyd-like heuristic algorithm that performs well for practical purposes. $(ii)$ In our results we also show the socially fair clustering cost obtained by the unconstrained Lloyd's algorithm. 
To the best of our knowledge, there are no implementations of the other socially fair clustering algorithms available for public use.

\textbf{Observations.} As established in \cite{GSV2021}, different groups incur very different costs with Lloyd's algorithm (see \Cref{fig:kmeans_meanstddev}).
On the Adult Income (Gender) and the Credit (Education) datasets, both Fair-Lloyd and Algorithm~\ref{alg:algo2} incur almost equal socially fair $k$ means cost.
In the multi-group case, such as Adult Income (Race) and Bank (Age), Algorithm~\ref{alg:algo2} outperforms Fair-Lloyd.
These observations are consistent with or without PCA preprocessing of the data.
In all the experiments in Figure~\ref{fig:kmeans_meanstddev} both Fair-Lloyd and Algorithm~\ref{alg:algo2} incur much less fair-cost compared to Lloyd.

\subsection{Socially Fair \texorpdfstring{\boldmath$k$}{k}-medians}
\begin{figure*}[!ht]
	\centering
	\begin{subfigure}[b]{0.05\linewidth}
		\centering
		\includegraphics[scale=0.4]{results/faircost_cropped.png} 
	\end{subfigure}
	\begin{subfigure}[b]{0.24\linewidth}
		\centering
		\includegraphics[scale=0.125]{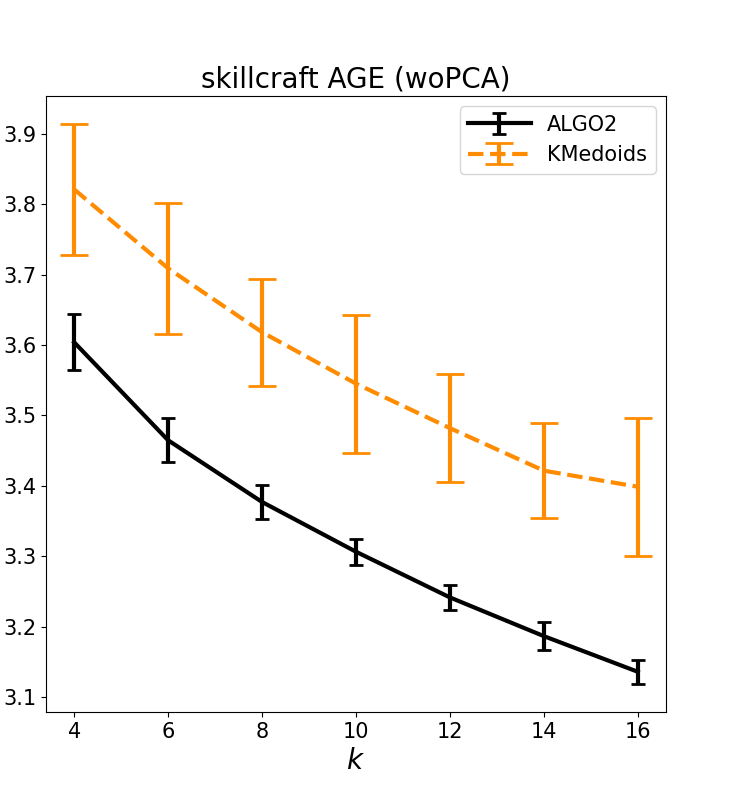}
	\end{subfigure}
	\begin{subfigure}[b]{0.24\linewidth}
		\centering
		\includegraphics[scale=0.17]{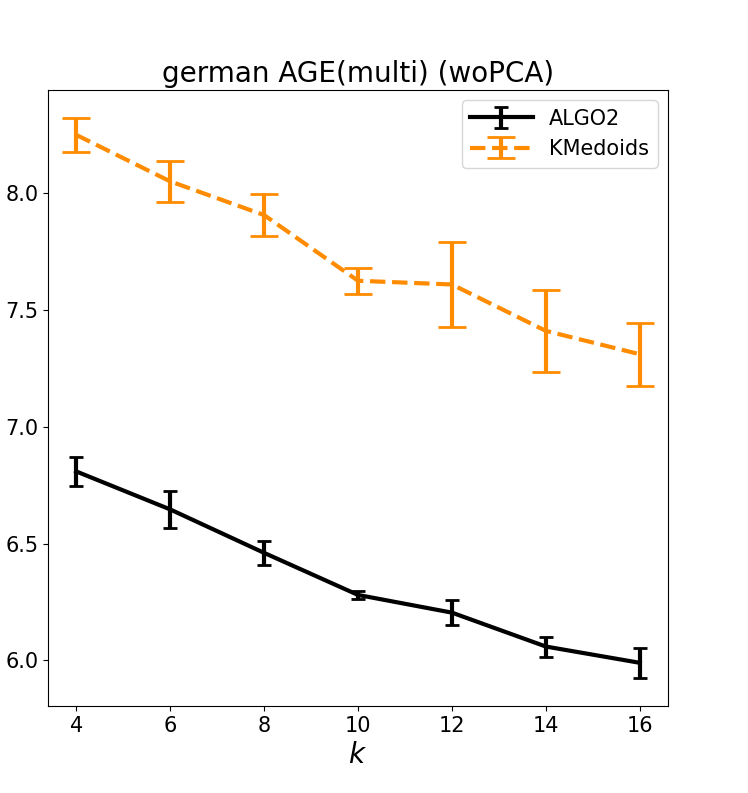}
	\end{subfigure}
	\begin{subfigure}[b]{0.24\linewidth}
		\centering
		\includegraphics[scale=0.125]{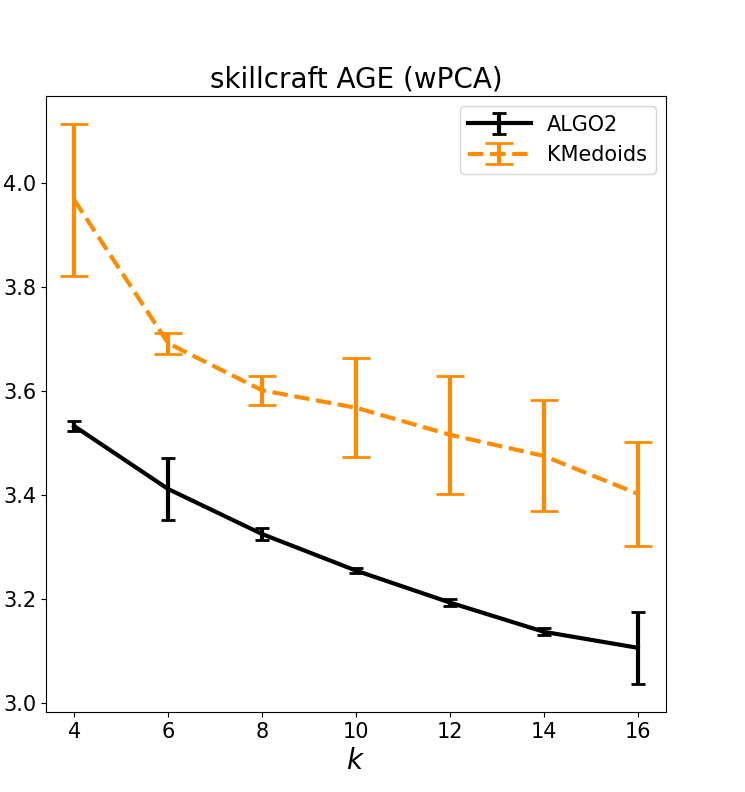}
	\end{subfigure}
	\begin{subfigure}[b]{0.24\linewidth}
		\centering
		\includegraphics[scale=0.125]{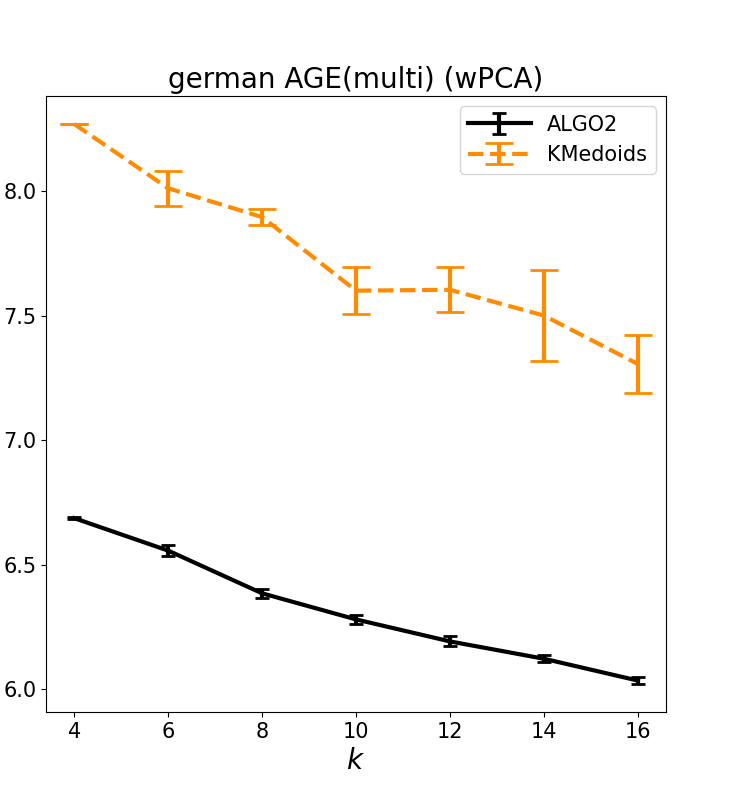}
	\end{subfigure}
	\caption{Both ALGO2 and KMedoids are run on $10$ different initializations for $20$ iterations. For ALGO2, coreset construction uses $5$ samples of size $M = 20$ for each $P_{ij}$. The plots show mean and standard deviation of the socially fair clustering cost (woPCA = without PCA, wPCA = with PCA, ALGO2 = \Cref{alg:algo2}).}
	\label{fig:kmedian_meanstddev}
\end{figure*}

For the socially fair $k$-medians experiments, we run Algorithm \ref{alg:algo2} along with our baseline on the German Credit (Age) and Skillcraft (Age) datasets (\Cref{fig:kmedian_meanstddev}). Both algorithms are initialized with the same set of centers and run on $10$ different initializations for $20$ iterations. We consider both cases of with and without PCA pre-processing.

\textbf{Baseline.} We compare our results with $(i)$ \emph{FasterPAM}~\cite{SCHUBERT2021101804}, a fast and practical $k$-Medoids based algorithm. $(ii)$ We consider the socially fair $k$-medians cost obtained by the unconstrained \emph{FasterPAM} algorithm. Although \emph{FasterPAM} is time-efficient, it is space-inefficient. Hence, we run experiments on datasets with relatively smaller number of data points.

\textbf{Observations.} The baseline $k$-medoids based algorithm incurs very different costs for different groups. Further, the fair-cost obtained by the $k$-medoids algorithm is much higher compared to that of \Cref{alg:algo2} (see \Cref{fig:kmedian_meanstddev}). \Cref{alg:algo2} achieves almost equal costs for different groups in all experiments.

\subsection{Socially Fair Subspace Approximation}
\begin{figure*}[!ht]
	\centering
	\begin{subfigure}[b]{0.03\linewidth}
		\centering
		\includegraphics[scale=0.45]{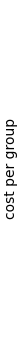} 
	\end{subfigure}
	\begin{subfigure}[b]{0.28\linewidth}
		\centering
		\includegraphics[scale=0.16]{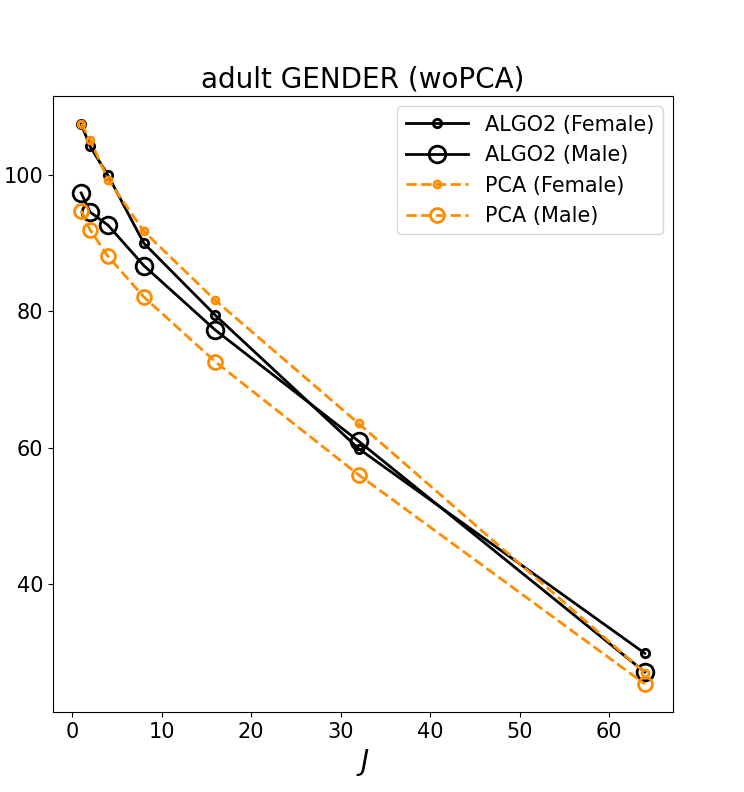}
	\end{subfigure}
	\begin{subfigure}[b]{0.28\linewidth}
		\centering
		\includegraphics[scale=0.16]{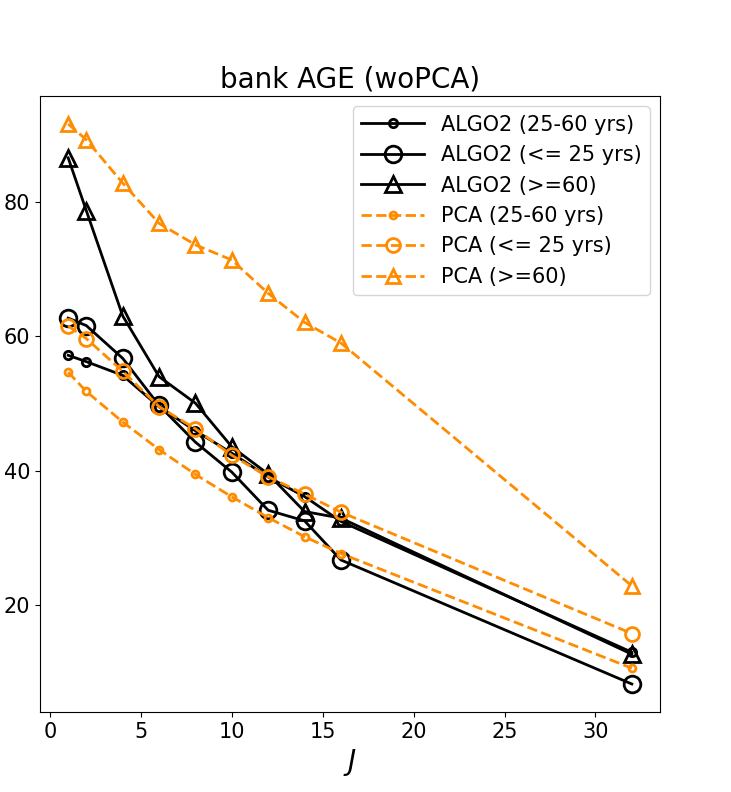} 
	\end{subfigure}
	\begin{subfigure}[b]{0.28\linewidth}
		\centering
		\includegraphics[scale=0.16]{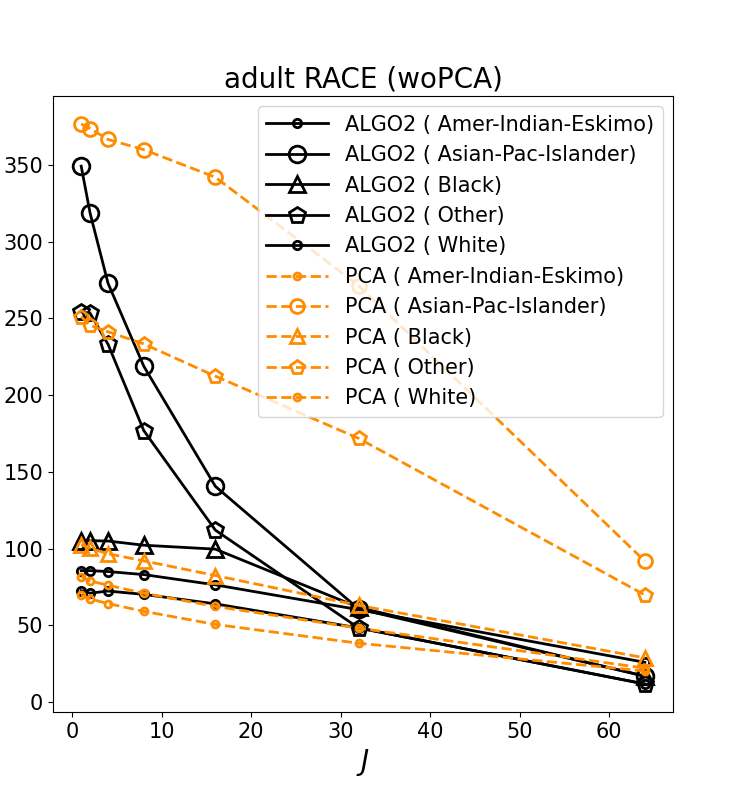} 
	\end{subfigure}
	\caption{For ALGO2, coreset construction uses $20$ samples of size $M = 500$ for each group. Plots show the per group cost incurred (mean and one standard deviation). $J$ on x-axis represents dimension of the linear subspace.}
	\label{fig:subspace_meanstddev}
\end{figure*}

For experiments on the socially fair linear subspace clustering problem, we consider the socially fair variant of the well studied subspace approximation problem, which is equivalent to the socially fair $(q,1,2)$ linear subspace clustering problem, as defined. We run Algorithm \ref{alg:algo2} along with the baseline on the Adult (Gender) dataset for the two groups case.
We also run experiments on the Adult (Race) and Bank (Age) datasets for the multigroups case. Since there is only one subspace (one cluster), the algorithms neither have to be run for multiple iterations nor do we have to consider multiple initializations. 
The $x$-axis in the plots in \Cref{fig:subspace_meanstddev} represents the dimension of the subspace.

\textbf{Baseline.} As a baseline, we run the unconstrained \emph{PCA} algorithm and consider the socially fair linear subspace clustering cost obtained. To the best of our knowledge, there is no implementation of a socially fair linear subspace clustering available.

\textbf{Observations.} Both \cref{alg:algo2} and PCA perform almost equally good on the Adult Income dataset (Gender) in terms of fair-cost (see \Cref{fig:subspace_meanstddev}). However, we observe that for more than two groups (Adult Income (Race) and Bank (Age)), \Cref{alg:algo2} outperforms PCA significantly in terms of fair-cost and achieves almost equal group-wise costs as the dimension of the subspace increases.

\section{Conclusion}
\label{sec:conclusion}
Clustering using center-based objectives and subspaces on large image, text, financial and scientific data sets has a wide range of applications. In order to alleviate harms to different demographic groups arising from inequitable clustering costs across different groups, we study the objective of socially fair clustering. We develop a unified framework to solve socially fair center-based clustering and linear subspace clustering problems, and propose practical and efficient approximation algorithms for them. Our algorithms either closely match or outperform the state-of-the-art baselines on standard real-world data sets in fairness literature. When $p$ and the number of groups $l$ are constants, an interesting open problem is to find the optimal approximation algorithms for socially fair centre-based $\ell_{p}$-norm clustering objectives and affine subspace clustering objectives with running time polynomial in the number of clusters $k$, the number of points $n$, and the dimension $d$.

\paragraph{Acknowledgements.} SG was supported in part by a Google Ph.D. Fellowship award. AL was supported in part by SERB Award ECR/2017/003296, a Pratiksha Trust Young Investigator Award, and an IUSSTF virtual center on “Polynomials as an Algorithmic Paradigm”. AL is also grateful to Microsoft Research for supporting this
collaboration.

\bibliographystyle{alpha}
\bibliography{references}

\appendix

\section{Appendix}
\begin{algorithm}
\caption{Framework for the socially fair linear subspace clustering}
\label{framework_subspace}
\begin{algorithmic}[1]
\Require The set of points $P$ and their group memberships, the numbers $k$, $z$, and $q$.

\Ensure $k$ linear subspaces $V$.

\State Let $S_j$ be an $\varepsilon$-strong coreset for group $j, \forall j\in[\ell]$.
\State Let $S := \bigcup_{j\in[\ell]} S_j$.
\For {each $k$ partitioning of the coreset $\cC = \paren{C_1, C_2, \ldots, C_k}$}
\State $V := \text\oracle(\cC)$.
\State $t := \max_{j \in [\ell]} \cost(V,S_j)$.
\EndFor
\State \Return The linear subspaces $V$ with minimum value of $t$.
\end{algorithmic}
\end{algorithm}
\end{document}